\documentclass[]{fairmeta}
\usepackage{graphicx}
\usepackage{amsmath}
\usepackage{amssymb}
\usepackage{amsthm}
\newtheorem{proposition}{Proposition}
\newtheorem{remark}{Remark}
\usepackage{enumitem}
\usepackage{graphicx}
\usepackage{wrapfig}
\usepackage{subcaption}
\usepackage{multirow}
\usepackage{pifont}
\usepackage[table]{xcolor}
\usepackage[dvipsnames]{xcolor}
\definecolor{DeepBlue}{RGB}{0,51,102}
\newcommand{\cbtcp}[1]{\tcp{\textcolor{DeepBlue}{#1}}}
\usepackage[ruled,vlined,linesnumbered]{algorithm2e}
\usepackage{titletoc}  
\usepackage{tocloft}
\newcommand\DoToC{%
  \startcontents
  \printcontents{}{1}{\textbf{Table of Contents (Appendix)}\vskip9pt\hrule\vskip5pt}
  \vskip3pt\hrule\vskip5pt
}

\title{DUEL: Adversarial Self-Play for Multimodal Reasoning}

\author[1]{Lin Qiu}
\author[1] {Hanqing Zeng}
\author[1] {Yao Liu}
\author[1] {Bingjun Sun}
\author [1] {Guangdeng Liao}
\author[1]{Ji Liu}

\affiliation[1]{Meta AI}

\abstract{Reinforcement learning (RL) has emerged as an effective paradigm for improving the reasoning capability of vision-language models (VLMs). However, RL-based optimization typically depends on costly high-quality annotations that are difficult to scale. Existing unsupervised alternatives may drift toward biased solutions due to weak visual grounding and the lack of reliable verification signals. We propose a self-evolving post-training framework, DUEL, where supervision emerges from adversarial interactions between two policies initialized from the same pretrained VLM. A Challenger generates an image-grounded true claim together with a minimally perturbed hard-negative counterpart, while a Solver verifies both claims against the image, encouraging fine-grained visual discrimination under near-neighbor semantics. To stabilize optimization, we introduce a length-normalized log-likelihood reward that preserves informative optimization signals beyond binary outcome supervision and improves learning stability under sparse feedback. Experiments show that DUEL consistently improves visual reasoning and robust discrimination without additional human annotations, external reward models, or image editing tools.}

\date{\today}
\correspondence{\email{lin.qiu.stats@gmail.com}}

\begin{document}

\maketitle

\begin{figure}[th] \centering \includegraphics[width=1\linewidth]{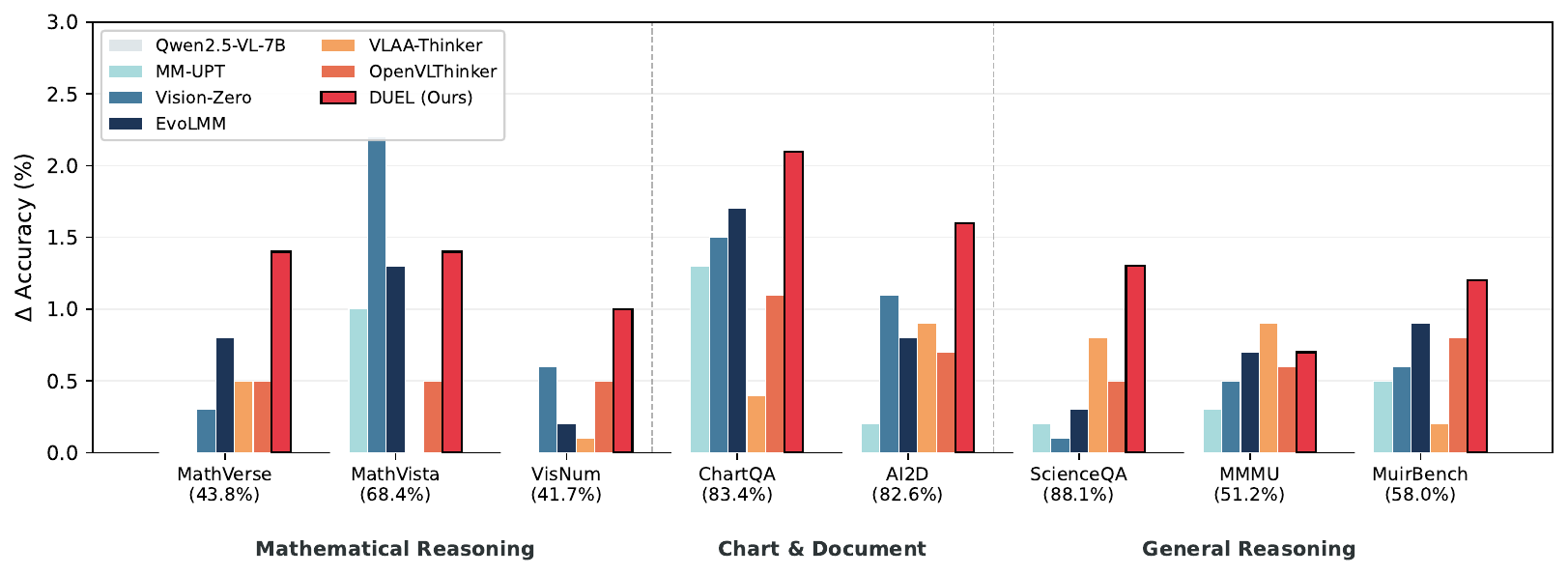} \caption{\textbf{Performance comparison of DUEL with SOTA post-training methods for VLMs.} All methods are trained on the same base model. The horizontal axis shows each benchmark with the base model's accuracy in parentheses; the vertical axis shows accuracy improvement ($\Delta$\%). Benchmarks are grouped into three categories: Mathematical Reasoning, Chart \& Document understanding, and General Reasoning. DUEL demonstrating broad and consistent improvement across all task categories without any human annotations.} \vspace{-2pt} \label{fig:comparsion_overall} \end{figure}

\section{Introduction} 
Vision-language models (VLMs) have achieved strong performance on multimodal tasks including image captioning~\citep{li2022blip}, visual question answering~\citep{alayrac2022flamingo}, and multimodal reasoning~\citep{chen2022pali}. Yet most training paradigms depend on large-scale human-curated data or external supervision such as supervised fine-tuning~\citep{dai2023instructblip} and preference-based alignment~\citep{yu2024rlhf}, constraining scalability and introducing reward bias in open-ended visual environments. Self-evolution has been widely adopted for LLMs, where models generate their own training signals through self-play~\citep{liu2025breaking}, self-critique~\citep{yuan2024self}, and iterative preference optimization~\citep{rafailov2023direct}. Absolute Zero~\citep{zhao2025absolute} exemplifies this by learning to propose and solve tasks without external data. Extending self-evolution to VLMs is increasingly urgent given the cost of multimodal annotation, yet existing approaches face fundamental limitations: self-consistency methods~\citep{visplay,EvoLMM} can reinforce confidently incorrect predictions and plateau, while Vision-Zero~\citep{vision-zero} relies on external image editors (GPT-based or Nano Banana modules) to construct training signals. Both lack a mechanism to ground rewards in visual evidence without external tools. 

\begin{figure*}[t]
    \centering
    \includegraphics[
        width=\textwidth,
        trim=0 160 0 140,
        clip
    ]{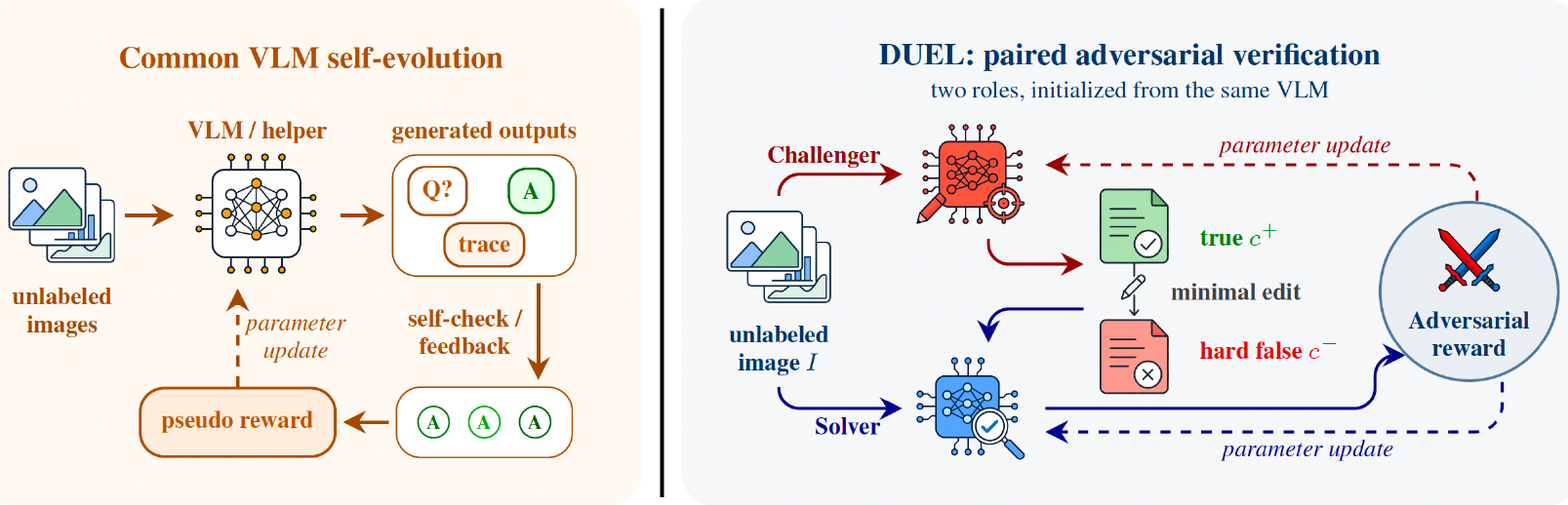}
    
    \caption{\textbf{DUEL compared with common VLM self-evolution flows.}
    Prior self-play and self-evolving VLM methods typically generate questions,
    answers, or rationales from unlabeled images and derive pseudo-rewards via
    agreement, self-checking, or tool feedback. In contrast, \textsc{DUEL}
    employs two adversarial roles: a Challenger generates a true claim and a
    minimally edited hard negative, while a Solver verifies both against the
    image. The resulting adversarial outcome updates both agents, providing
    grounded supervision without labels, teachers, external verifiers, or image
    editing. Fig.~\ref{fig:srta_framework} illustrates the overall workflow.}

    \label{fig:pair_supervision_source}
\end{figure*}


The central challenge is constructing scalable training signals that remain grounded in visual evidence without relying on additional human annotations or external reward supervision. To this end, we propose \textbf{DUEL}, which derives supervision entirely from adversarial interactions between two policies instantiated from the same pretrained VLM. DUEL follows a two-stage workflow: 
\begin{enumerate}[leftmargin=*,itemsep=2pt,topsep=2pt] \item \textbf{Adversarial Paired Claim Generation}: A Challenger produces an image-grounded true claim and a minimally perturbed hard-negative, constructing near-neighbor supervision that cannot be resolved through language priors alone. \item \textbf{Calibrated Claim Verification}: A Solver verifies claim truthfulness under a length-normalized likelihood reward that promotes consistently confident correctness while penalizing confident errors. \end{enumerate} 
By coupling near-neighbor adversarial supervision with calibrated rewards, DUEL turns unlabeled images into reliable training signals that tightly ground learning in visual evidence, requiring no external annotations, teacher models, verifiers, or image transformations. The main contributions of this paper are: 

\noindent\textbullet\ \textbf{New Perspective.} We formulate self-evolving VLM reasoning as an adversarial verification game on unlabeled images, deriving training signals from adversarial outcome verification rather than additional human supervision or self-agreement. 

\noindent\textbullet\ \textbf{Adversarial Self-Play Framework.} We propose DUEL, a Challenger--Solver paradigm with near-neighbor paired claims and a confidence-calibrated reward, enabling fine-grained visual discrimination through zero-sum outcome-based optimization. 

\noindent\textbullet\ \textbf{Theoretical Grounding.} We prove that (i) the adversarial game admits a Nash equilibrium under standard assumptions; (ii) near-neighbor negatives theoretically encourage higher mutual dependence between Solver decisions and visual evidence; and (iii) the adversarial objective induces an adaptive curriculum where task difficulty increases with Solver competence. 

\noindent\textbullet\ \textbf{Empirical Validation.} We conduct extensive experiments on fine-grained visual reasoning and robust discrimination benchmarks, demonstrating consistent gains and improved stability.

\section{Related Work}

\textbf{Supervised Multimodal Pretraining.}
Early vision-language models were primarily trained under supervised learning paradigms, relying on large-scale human-annotated datasets \citep{LXMERT, UNITER}. Subsequent joint vision-language pretraining approaches \citep{VISUALBERT, ViLBERT, unicoder-VL} aligned visual and textual representations through cross-modal encoders. CLIP \citep{CLIP} further advanced this direction via large-scale contrastive learning on web-scale image-text pairs, significantly improving transferability. Building on these advances, Flamingo \citep{alayrac2022flamingo} and BLIP-2 \citep{Q-Former} extended large language models to multimodal settings using cross-attention and lightweight bridging modules. Despite their success, these approaches remain heavily dependent on curated data or high-quality supervision.

\noindent\textbf{RLHF-Based Multimodal Alignment.}
Reinforcement Learning from Human Feedback (RLHF) has become a dominant paradigm for aligning large language models \citep{PPO, chatgpt}, and has been extended to multimodal settings. Methods such as Factually Augmented RLHF \citep{Align} train reward models using human preference data to improve factual grounding, while DPO \citep{rafailov2023direct} and related approaches directly optimize policies from preference comparisons without explicit reward modeling. However, these methods rely on externally provided preference pairs or static preference signals. In contrast, DUEL constructs training signals online through adversarial self-play on unlabeled images.

\noindent\textbf{Self-Play and Self-Evolving Learning Paradigms.} Recent work reduces human supervision by leveraging automated training signals. LLaVA \citep{LLaVA} synthesizes multimodal instruction data using strong language models, while VLM-RM \citep{VLM-RM}, RL-VLM-F \citep{RL-VLM-F}, and Eureka \citep{eureka} construct or optimize reward functions with foundation models. More closely related to our work, Vision-Zero \citep{vision-zero}, EvoLMM \citep{EvoLMM}, and VisPlay \citep{visplay} explore self-play and self-consistency mechanisms for learning from unlabeled images. However, these methods primarily rely on agreement-based or consistency-based signals, which may reinforce biased predictions and provide weak visual grounding. In contrast, \textsc{DUEL} formulates self-play as an adversarial paired verification game, where a Challenger generates near-neighbor counterfactual claims and a Solver verifies them against the image, encouraging fine-grained visually grounded discrimination.
\section{Method}
\label{sec:method}

\textbf{Problem Formulation.}
Let $\mathcal{D}$ denote an unlabeled image distribution and let $I \sim \mathcal{D}$ be a sampled image.
DUEL initializes two policies from the same pretrained VLM: a \emph{Challenger} $\pi_{\phi}(c \mid I,z)$ that generates an image-grounded claim $c$ conditioned on a polarity variable $z\in\{1,0\}$, and a \emph{Solver} $\pi_{\theta}(s \mid I,c)$ that outputs a verification sequence $s$ and a decision $a=h(s)\in\{\texttt{yes},\texttt{no}\}$.
Here, $z=1$ indicates that the generated claim should be true and $z=0$ indicates that it should be false.
In each episode, the Challenger constructs a paired instance $(c^{+},c^{-})$ by sampling $c^{+}\sim\pi_{\phi}(\cdot\mid I,z{=}1)$ and then $c^{-}\sim\pi_{\phi}(\cdot\mid I,c^{+},z{=}0)$, and the Solver is queried on both claims with fixed targets $y^{+}=\texttt{yes}$ and $y^{-}=\texttt{no}$.
No human labels or external verifiers are used. The objective of DUEL is to achieve self-supervised improvement of image-grounded verification via paired adversarial self-play.


\begin{figure*}[t]
  \centering
  \includegraphics[width=\textwidth]{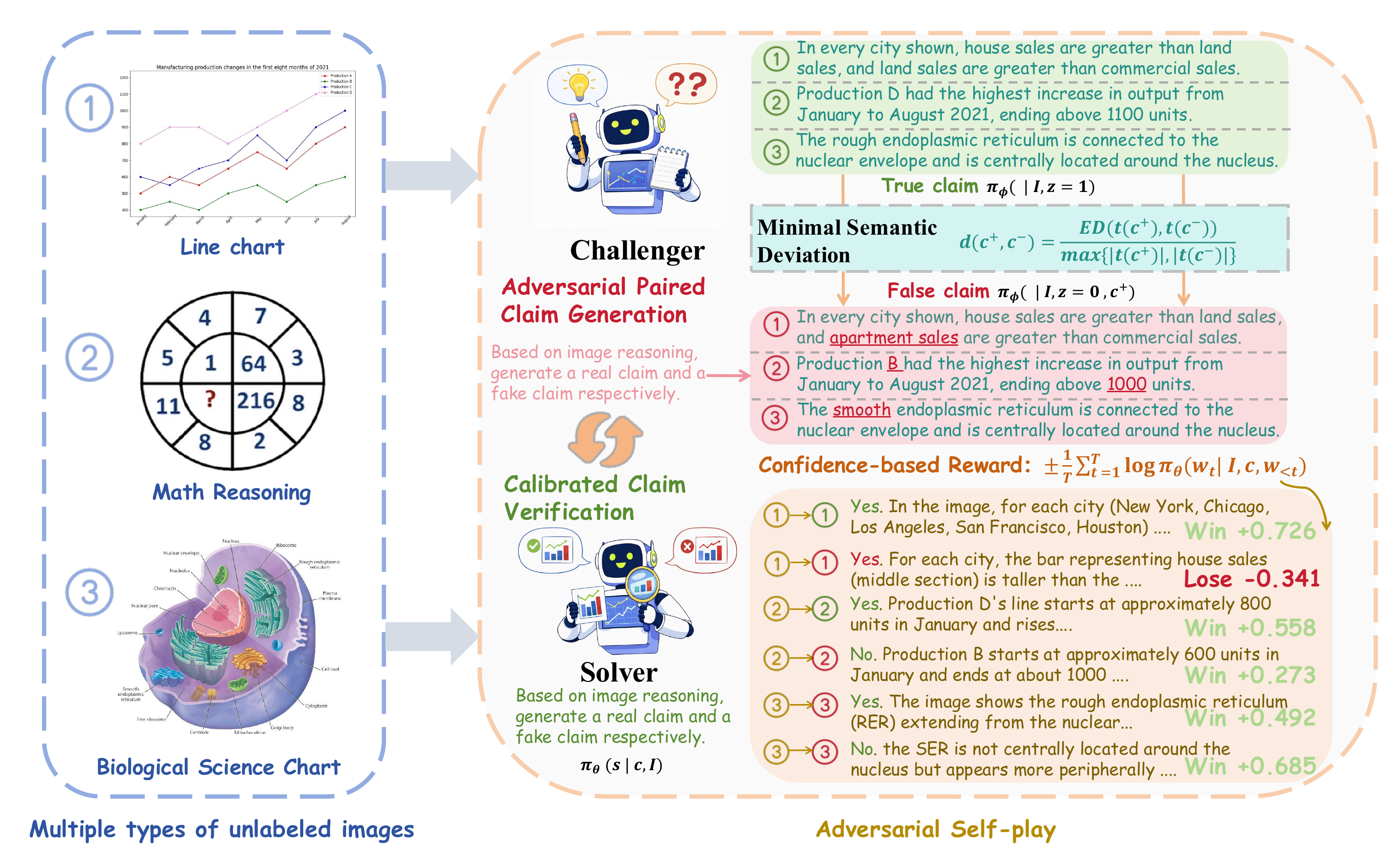}
  \caption{\textbf{Overall framework of DUEL.} Given an unlabeled image, the Challenger first generates an image-supported true claim and then constructs a minimally perturbed hard-negative false claim. The Solver verifies each claim against the image, and an outcome-based confidence reward provides the training signal to update both agents through adversarial self-play.}
  \label{fig:srta_framework}
\end{figure*}

In this section, we present DUEL, a self-evolving framework for training VLMs on unlabeled images via adversarial verification. DUEL instantiates a Challenger to generate an image-grounded true claim and a minimally perturbed hard-negative, and a Solver to verify claim truthfulness with a calibrated likelihood-based reward. An overview of DUEL is shown in Fig.~\ref{fig:srta_framework}.

\subsection{Adversarial Paired Claim Generation}
\label{sec:episode}
Unsupervised self-evolution signals often suffer from weak visual grounding and reward bias, allowing models to exploit language priors without resolving fine-grained visual evidence \citep{zhou2024calibrated}. DUEL instead generates a true claim paired with a minimally perturbed hard-negative, forcing near-neighbor discrimination and encouraging the model to rely more on visual evidence under tightly controlled semantics.

The Challenger is first prompted to generate a \emph{true} image-grounded claim that requires visual reasoning evidence from the image $I$. It then samples an output sequence $o^{+}$ from its policy:
\begin{equation}
o^{+} \sim \pi_{\phi}(\cdot \mid I, z=1),
\end{equation}
where $z \in \{0,1\}$ is a conditioning variable indicating whether the claim should be true ($z=1$) or false ($z=0$).
We deterministically extract the claim text $c^{+}$ from $o^{+}$ via a parsing function $g(\cdot)$:
\begin{equation}
c^{+} = g(o^{+}).
\end{equation}
For interpretability, $o^{+}$ also includes an image-evidence explanation $r^{+}$ that justifies why the claim holds for $I$.

\noindent\textbf{Paired hard-negative (false) claim generation.}
A semantically unconstrained negative may be rejected by language plausibility alone, weakening supervision and bypassing visual evidence \citep{goyal2017making}. We therefore enforce minimal semantic deviation to induce near-neighbor false claims.
To enforce \emph{minimal semantic deviation} and construct an \emph{adversarial paired hard negative}, the Challenger generates the false claim conditioned on both the image $I$ and the previously generated true claim $c^{+}$:
\begin{equation}
o^{-} \sim \pi_{\phi}(\cdot \mid I, c^{+}, z=0), \qquad
c^{-} = g(o^{-}).
\end{equation}
This conditioning restricts the false claim to be a subtle modification of the true claim, thereby preventing trivial falsehoods and promoting fine-grained visual reasoning.
We implement the minimal-deviation constraint with a token-level edit similarity.
Let $\mathbf{t}(c)$ denote the token sequence obtained from claim $c$ after normalization and tokenization.
Let $\mathrm{ED}(\mathbf{t}(c^{+}),\mathbf{t}(c^{-}))$ denote the minimum number of insertions, deletions, and substitutions that transforms $\mathbf{t}(c^{+})$ into $\mathbf{t}(c^{-})$.
We use a length-normalized edit distance
\begin{equation}
d(c^{+},c^{-}) =
\frac{\mathrm{ED}(\mathbf{t}(c^{+}),\mathbf{t}(c^{-}))}{\max\{|\mathbf{t}(c^{+})|,|\mathbf{t}(c^{-})|\}}.
\label{eq:editdist}
\end{equation}
A smaller $d(c^{+},c^{-})$ indicates higher similarity and tighter semantic proximity.
We define the minimal-deviation reward
\begin{equation}
R_{\mathrm{stealth}}(c^{+},c^{-}) = \exp\big(-\alpha\, d(c^{+},c^{-})\big),
\label{eq:stealth_reward}
\end{equation}
with temperature $\alpha>0$ controlling the strength of the constraint.

\subsection{Calibrated Claim Verification}
\label{sec:claim_verification}
This module trains the Solver to perform fine-grained, image-grounded verification by learning calibrated binary decisions on paired near-neighbor claims.
Given an image--claim pair $(I,c)$, the Solver samples a verification sequence
\begin{equation}
s \sim \pi_{\theta}(\cdot \mid I, c),
\end{equation}
and deterministically maps it to a binary decision
\begin{equation}
a = h(s) \in \{\texttt{yes}, \texttt{no}\},
\end{equation}
where $h(\cdot)$ extracts the decision token from $s$.
The Solver is also required to produce a visual reasoning evidence string $e$ before emitting the final decision.
In each episode, the Solver is queried on both claims:
\begin{equation}
\begin{aligned}
s^{+} &\sim \pi_{\theta}(\cdot \mid I, c^{+}), \quad a^{+}=h(s^{+}),\\
s^{-} &\sim \pi_{\theta}(\cdot \mid I, c^{-}), \quad a^{-}=h(s^{-}).
\end{aligned}
\end{equation}
The target labels are fixed by construction, $y^{+}=\texttt{yes}$ and $y^{-}=\texttt{no}$.

\noindent\textbf{Length-normalized verification reward.}
We use a length-normalized likelihood reward to discourage lucky guessing and low-quality outputs, and to provide a calibrated training signal that reflects the Solver's confidence throughout the verification sequence.
Let $s=(w_{1},\dots,w_{T})$ denote the generated token sequence for a given $(I,c)$.
The conditional sequence probability is
\begin{equation}
\pi_{\theta}(s \mid I,c) = \prod_{t=1}^{T} \pi_{\theta}(w_{t} \mid I,c,w_{<t}),
\end{equation}
with log-likelihood, where we use $\ell_{\theta}(s \mid I,c)$ as the per-token log-likelihood score:
\begin{equation}
\begin{aligned}
\log \pi_{\theta}(s \mid I,c)
&= \sum_{t=1}^{T} \log \pi_{\theta}\!\left(w_{t} \mid I,c,w_{<t}\right),\\
\ell_{\theta}(s \mid I,c)
&= \frac{1}{T}\sum_{t=1}^{T} \log \pi_{\theta}\!\left(w_{t} \mid I,c,w_{<t}\right).
\end{aligned}
\label{eq:per_token_ll}
\end{equation}

We define the correctness sign as
\begin{equation}
\sigma(a,y)=
\begin{cases}
+1, & a=y,\\
-1, & a\neq y.
\end{cases}
\label{eq:correctness_sign}
\end{equation}

\begin{equation}
R_S(I,c,y,s)
=
\sigma(h(s),y)\,(-\ell_\theta(s\mid I,c)).
\label{eq:solver_reward_single}
\end{equation}

The outcome term $\sigma(h(s),y)\in\{-1,+1\}$ provides
task-level correctness supervision, while the
length-normalized likelihood term $-\ell_\theta(s|I,c)$
introduces a confidence-sensitive signal across
different rollouts.

This reward preserves a graded training signal beyond
binary correctness by distinguishing rollouts according
to their sequence likelihood. As shown theoretically in
Appendix C~\ref{prop:variance}, this avoids advantage collapse under
group-normalized optimization and maintains informative
learning signals even when multiple rollouts share the
same decision outcome.

\subsection{Adversarial Self-play Strategy Optimization}
\label{sec:pg}
DUEL is formulated as a zero-sum game between the Challenger and the Solver.
The Challenger aims to reduce the Solver's verification performance on paired claims while maintaining minimal semantic deviation between the true claim and its hard-negative counterpart. This design forces the Solver to learn fine-grained, image-grounded discrimination rather than exploiting language priors.
Concretely, the Challenger receives the paired reward
\begin{equation}
R_{C}^{\mathrm{pair}}(I,c^{+},c^{-},s^{+},s^{-})
=
-
R_{S}^{\mathrm{pair}}(I,c^{+},c^{-},s^{+},s^{-})
+
\lambda_{\mathrm{stealth}}\, R_{\mathrm{stealth}}(c^{+},c^{-}),
\label{eq:challenger_reward_pair}
\end{equation}
where $\lambda_{\mathrm{stealth}}\ge 0$ balances adversarial difficulty and minimal deviation.
The resulting min--max learning objective is
\begin{equation}
\max_{\theta}\ \min_{\phi}\ 
\mathbb{E}_{I\sim\mathcal{D}}
\Big[
R_{S}^{\mathrm{pair}}(I,c^{+},c^{-},s^{+},s^{-})
-
\lambda_{\mathrm{stealth}}\, R_{\mathrm{stealth}}(c^{+},c^{-})
\Big].
\label{eq:minmax}
\end{equation}
To robustly optimize the Solver from sparse, outcome-based episode feedback and reduce gradient variance during self-play, DUEL adopts a sampling-based policy optimization scheme.
The Solver is optimized with GRPO \citep{shao2024deepseekmath} by sampling $K$ verification outputs per episode and using group-normalized paired rewards as advantages.
The Challenger is updated from a single episode outcome.

\paragraph{Group normalization.}
Given a fixed context, let $\{r^{(k)}\}_{k=1}^{K}$ denote the rewards of $K$ samples from the current policy.
We apply group normalization:
\begin{equation}
\mu_r=\mathrm{mean}\!\left[r^{(k)}\right],\quad
\sigma_r=\mathrm{std}\!\left[r^{(k)}\right],\quad
A^{(k)}=\frac{r^{(k)}-\mu_r}{\sigma_r+\epsilon},\quad k=1,\ldots,K,
\label{eq:grpo_groupnorm}
\end{equation}
where $\epsilon>0$ is for numerical stability.
We treat $A^{(k)}$ as a stop-gradient quantity.

\paragraph{Solver update (paired GRPO).}
For each episode $(I,c^{+},c^{-})$, we draw $K$ Solver samples
$s^{+,(k)}\sim\pi_{\theta}(\cdot\mid I,c^{+})$ and $s^{-,(k)}\sim\pi_{\theta}(\cdot\mid I,c^{-})$,
and compute paired rewards
\begin{equation}
r_{S}^{(k)} \triangleq R_{S}^{\mathrm{pair}}(I,c^{+},c^{-},s^{+,(k)},s^{-,(k)}),\quad k=1,\ldots,K.
\end{equation}
Applying Eq.~\eqref{eq:grpo_groupnorm} to $\{r_{S}^{(k)}\}$ yields $A_{S}^{(k)}$, and we optimize:
\begin{equation}
J_{S}^{\mathrm{pair}}(\theta)
=
\mathbb{E}\Bigg[
\frac{1}{K}\sum_{k=1}^{K}
A_{S}^{(k)}\Big(
\log \pi_{\theta}(s^{+,(k)}\mid I,c^{+})
+
\log \pi_{\theta}(s^{-,(k)}\mid I,c^{-})
\Big)
\Bigg].
\label{eq:solver_grpo_pair}
\end{equation}

\paragraph{Challenger update.}
The Challenger samples $o^{+}\sim \pi_{\phi}(\cdot \mid I,z=1)$ and then $o^{-}\sim \pi_{\phi}(\cdot \mid I,c^{+},z=0)$ once per episode, inducing $(c^{+},c^{-})$ via $c^{\pm}=g(o^{\pm})$.
Given the $K$ Solver samples above, we form an episode-level outcome by averaging the paired Solver reward,
\begin{equation}
\overline{R}_{S}^{\mathrm{pair}} \triangleq \frac{1}{K}\sum_{k=1}^{K} R_{S}^{\mathrm{pair}}(I,c^{+},c^{-},s^{+,(k)},s^{-,(k)}),
\end{equation}
and define the corresponding episode-level Challenger reward
\begin{equation}
\overline{R}_{C}^{\mathrm{pair}}
\triangleq
-\overline{R}_{S}^{\mathrm{pair}}
+
\lambda_{\mathrm{stealth}}\, R_{\mathrm{stealth}}(c^{+},c^{-}).
\end{equation}
The Challenger objective is
\begin{equation}
J_{C}(\phi)
=
\mathbb{E}\Big[
\overline{R}_{C}^{\mathrm{pair}}
\Big(
\log \pi_{\phi}(o^{+}\mid I,z=1)
+
\log \pi_{\phi}(o^{-}\mid I,c^{+},z=0)
\Big)
\Big],
\label{eq:challenger_pg_avg}
\end{equation}
treating $\overline{R}_{C}^{\mathrm{pair}}$ as a stop-gradient scalar.

\noindent\textbf{Overall Pipeline.} DUEL conducts adversarial self-play on unlabeled images with two coupled agents: a Challenger generating paired claims (one true and one minimally perturbed false claim), and a Solver verifying claim validity under a confidence-sensitive reward. The agents are optimized in a zero-sum game: the Solver improves verification robustness, while the Challenger learns to craft subtle yet challenging negatives under a minimal-deviation constraint. The entire process is contained in Appendix Algorithm \ref{alg:pair}.

\section{Theoretical Properties}

We provide theoretical analysis and proofs in Appendix~\ref{app:theory} showing that DUEL
improves visual grounding through near-neighbor supervision, preserves
informative optimization signals under sparse rewards, induces adaptive
curriculum-like self-play dynamics, and admits a stable adversarial game
formulation.

\textbf{Theorem 1 (Near-Neighbor Negatives Increase Visual Dependence).}
Let $c^+$ and $c^-$ denote paired claims with semantic distance
\[
d(c^+,c^-)
=
\frac{
ED(t(c^+),t(c^-))
}{
\max\{|t(c^+)|,|t(c^-)|\}
}.
\]
As $d(c^+,c^-)\rightarrow 0$, linguistic separability between the paired
claims decreases, and the Solver decision increasingly depends on
image-conditioned evidence:
\[
I(a;I\mid c^+,c^-)
\uparrow
\quad\text{as}\quad
d(c^+,c^-)\downarrow .
\]
Thus, near-neighbor counterfactual claims suppress language-only shortcuts
and force the Solver to rely more heavily on fine-grained visual grounding.

\vspace{2pt}

\textbf{Theorem 2 (Gradient Signal Preservation under Calibrated Rewards).}
Let
\[
R_{\mathrm{out}}=\sigma(h(s),y)
\]
denote an outcome-only reward, and let
\[
R_S
=
\sigma(h(s),y)(-\ell_\theta(s\mid I,c))
\]
denote DUEL's calibrated reward.

Under group-normalized optimization, outcome-only rewards may collapse to
identical values across multiple rollouts, causing the advantage signal to
vanish. In contrast, DUEL's length-normalized likelihood reward preserves
reward variability through sequence-likelihood differences, maintaining
informative optimization signals even when multiple rollouts share the same
decision outcome.

\vspace{2pt}

\textbf{Theorem 3 (Adversarial Self-Play Induces Adaptive Difficulty).}
Let $\epsilon_t$ denote the expected edit distance between paired claims at
iteration $t$. Under adversarial optimization, if Solver capability improves,
the Challenger's optimal response satisfies
\[
\epsilon_{t+1}\le \epsilon_t .
\]
Therefore, DUEL automatically generates progressively harder near-neighbor
examples, inducing a curriculum-like training process where task difficulty
co-evolves with Solver competence.

\vspace{2pt}

\textbf{Corollary 1 (Stable Adversarial Game Formulation).}
Let
\[
V(\theta,\phi)
=
\mathbb{E}_{I\sim D}
\left[
R_S^{\mathrm{pair}}
-
\lambda_{\mathrm{stealth}}
R_{\mathrm{stealth}}
\right]
\]
denote the DUEL game value. Under standard compactness and continuity
assumptions on the Challenger and Solver policy classes, the zero-sum game
\[
\max_\theta \min_\phi V(\theta,\phi)
\]
admits a mixed-strategy Nash equilibrium. This establishes that DUEL defines
a stable adversarial learning framework rather than uncontrolled self-play.
\section{Experiments}
\subsection{Experimental Setup}
\textbf{Datasets.}
We train and evaluate DUEL on mathematical and visually grounded reasoning tasks.
For training, we follow the data setup of EvoLMM~\citep{EvoLMM} and construct an unlabeled image pool by sampling about 1,000 images from each of six benchmarks, including ChartQA~\citep{masry2022chartqa}, AI2D~\citep{kembhavi2016diagram}, InfographicVQA~\citep{mathew2022infographicvqa}, PlotQA~\citep{methani2020plotqa}, ChartX~\citep{xia2025chartx}, and Geometry3K~\citep{lu2021inter}, resulting in roughly 6,000 images in total.
These sources span charts, plots, scientific diagrams, and geometric figures, providing diverse visual inputs for adversarial self-evolving training using images only.
For evaluation, we assess DUEL on a broader suite of reasoning benchmarks, including ChartQA~\citep{masry2022chartqa}, MathVerse~\citep{zhang2024mathverse}, MathVista~\citep{lu2023mathvista}, AI2D~\citep{kembhavi2016diagram}, VisNumBench~\citep{weng2025visnumbench}, ScienceQA~\citep{lu2022learn}, MuirBench \citep{wang2024muirbench} and MMMU~\citep{yue2024mmmu}, and conduct all evaluations with lmms-eval~\citep{zhang2025lmms}.

\noindent\textbf{Baselines and Models.} We compare DUEL with three unsupervised methods (MM-UPT~\citep{wei2025unsupervised}, Vision-Zero~\citep{vision-zero}, EvoLMM~\citep{EvoLMM}) and two supervised methods requiring human annotations (VLAA-Thinker-7B~\citep{chen2025sft}, OpenVLThinker-7B~\citep{deng2025openvlthinker}). To validate architecture generality, we apply DUEL to four VLMs with diverse vision encoders: Qwen2.5-VL-7B/3B~\citep{bai2025qwen25vltechnicalreport}, Gemma3-12B-IT~\citep{gemma2025gemma3}, and InternVL3-8B~\citep{zhu2025internvl3}, all using identical hyperparameters and 1K unlabeled images. We evaluate on 8 benchmarks spanning mathematical reasoning (MathVerse, MathVista, VisNumBench), chart understanding (ChartQA, AI2D), and general reasoning (ScienceQA, MMMU, MuirBench). The Solver and Challenger are instantiated as separate policies ($\pi_\theta$, $\pi_\phi$) initialized from the same pretrained checkpoint.

\noindent\textbf{Training Settings.} For self-play optimization, we sample $K=3$ Solver rollouts per claim, update the Challenger every $f_C=2$ iterations, and train for $T=5000$ steps. The stealth regularization weight is set to $\lambda_{\mathrm{stealth}}=0.2$ and the temperature in the stealth reward to $\alpha=5$. We apply LoRA ($r=16$, $\alpha=32$) to all attention and MLP projection layers while freezing the vision encoder. All experiments use two NVIDIA H200 GPUs with HuggingFace Transformers v4.38, with a learning rate of $1\times10^{-6}$. 
Training takes approximately 24 hours for the 7B model.

\subsection{Main Results}
We evaluate DUEL on 8 benchmarks spanning mathematical reasoning, chart/document understanding, and general visual reasoning. Fig.~\ref{fig:comparsion_overall} and Table~\ref{tab:main_results_all} compare DUEL against the base model and representative baselines. We highlight three key findings:

\textbf{Broad and consistent improvement.} DUEL (Solver) achieves the highest or tied-highest accuracy on 6 out of 8 benchmarks, yielding an average improvement of +1.4\% over the base Qwen2.5-VL-7B. Gains are distributed across all three task categories, mathematical reasoning (MathVerse +1.4\%, MathVista +1.4\%), chart understanding (ChartQA +2.1\%, AI2D +1.6\%), and general reasoning (ScienceQA +1.3\%, MuirBench +1.2\%), demonstrating that DUEL's adversarial self-play strengthens diverse capabilities simultaneously rather than specializing in a single domain.

\textbf{Superiority over both unsupervised and supervised baselines.} DUEL outperforms all three unsupervised methods (MM-UPT, Vision-Zero, EvoLMM) as well as the supervised methods VLAA-Thinker and OpenVLThinker on average, despite using zero human annotations. Moreover, DUEL (Solver) consistently outperforms DUEL (Challenger) across all benchmarks, confirming that the verification policy benefits more directly from the confidence-calibrated reward and repeated exposure to near-neighbor claim pairs.
\begin{table*}[t]
\centering
\caption{Comprehensive results on visual reasoning benchmarks. Best results per base model are in {\bf bold}. $\Delta$ denotes improvement of DUEL (Solver) over the base model (Qwen2.5-VL-7B). Standard deviations are computed over 5 random samplings. DUEL  outperforms baselines on 6 out of 8 benchmarks.}
\label{tab:main_results_all}
\setlength{\tabcolsep}{3.5pt}
\renewcommand{\arraystretch}{1.08}
\definecolor{pairblue}{RGB}{220,235,245}
\resizebox{0.90\linewidth}{!}{
\begin{tabular}{@{}lccc|cc|ccc|c@{}}
\toprule
\textbf{Method} & \multicolumn{3}{c|}{\textbf{Mathematical Reasoning}} & \multicolumn{2}{c|}{\textbf{Chart \& Document}} & \multicolumn{3}{c|}{\textbf{General Reasoning}} & \\
\cmidrule(lr){2-4} \cmidrule(lr){5-6} \cmidrule(lr){7-9}
& \textbf{MathVerse} & \textbf{MathVista} & \textbf{VisNum} & \textbf{ChartQA} & \textbf{AI2D} & \textbf{ScienceQA} & \textbf{MMMU} & \textbf{MuirBench} & \textbf{Avg.} \\
\midrule
Qwen2.5-VL-7B & 43.8 & 68.4 & 41.7 & 83.4 & 82.6 & 88.1 & 51.2 & 58.0 & 64.6 \\
\midrule
MM-UPT & 43.7 & 69.4 & 41.6 & 84.7 & 82.8 & 88.3 & 51.5 & 58.5 & 65.1 \\
Vision-Zero(CLEVR) & 44.1 & \textbf{70.6} & 42.3 & 84.9 & 83.7 & 88.2 & 51.7 & 58.6 & 65.5 \\
EvoLMM & 44.6 & 69.7 & 41.9 & 85.1 & 83.4 & 88.4 & 51.9 & 58.9 & 65.5 \\
VLAA-Thinker-7B & 44.3 & 68.2 & 41.8 & 83.8 & 83.5 & 88.9 & \textbf{52.1} & 58.2 & 65.1 \\
OpenVLThinker-7B & 44.3 & 68.9 & 42.2 & 84.5 & 83.3 & 88.6 & 51.8 & 58.8 & 65.3 \\
\midrule
\rowcolor{pairblue}
DUEL (Challenger) & 44.5 $\pm$ 0.42 & 68.6 $\pm$ 0.21 & 41.9 $\pm$  0.31& 84.4 $\pm$ 0.23 & 83.5 $\pm$ 0.33 & 88.5 $\pm$ 0.21 & 51.4 $\pm$ 0.31 & 58.6 $\pm$ 0.23 & 65.2 \\
\rowcolor{pairblue}
\textbf{DUEL (Solver)} & \textbf{45.2$\pm$ 0.34} & 69.8 $\pm$0.32 & \textbf{42.7 $\pm$ 0.27} & \textbf{85.5 $\pm$ 0.33} & \textbf{84.2 $\pm$ 0.29} & \textbf{89.4 $\pm$ 0.18} & 51.9 $\pm$ 0.24 & \textbf{59.2 $\pm$ 0.46} & \textbf{66.0} \\
\midrule
\rowcolor{pairblue}
\textbf{$\Delta$ vs Base} & +1.4\% & +1.4\% & +1.0\% & +2.1\% & +1.6\% & +1.3\% & +0.7\% & +1.2\% & \textbf{+1.4\%} \\
\bottomrule
\end{tabular}
}
\end{table*}

\subsection{Cross-Architecture Generalization}

Effectiveness of DUEL across different vision-language model backbones. We apply the same Challenger--Solver adversarial self-play training to four VLM families without changing architecture, supervision, or training hyperparameters. On Qwen2.5-VL-7B, DUEL consistently improves performance across reasoning benchmarks, including ChartQA (83.4\% $\rightarrow$ 85.5\%) and ScienceQA (88.1\% $\rightarrow$ 89.4\%). Similar improvements are observed on Qwen2.5-VL-3B (+2.0\% relative average improvement), InternVL3-8B (+2.9\%), and Gemma3-12B-IT (+2.9\%), despite their substantially different vision encoders and multimodal fusion strategies. Improvements are consistently observed across mathematical reasoning, chart understanding, and general reasoning benchmarks, suggesting that DUEL functions as a broadly compatible post-training framework rather than being tied to a specific VLM architecture.
\begin{table*}[t]
\centering
\caption{Cross-architecture evaluation of DUEL across diverse vision-language model backbones.}
\label{tab:cross_arch}
\definecolor{pairblue}{RGB}{220,235,245}
\resizebox{\textwidth}{!}{%
\begin{tabular}{ll|ccc|cc|ccc|c}
\toprule
& & \multicolumn{3}{c|}{\textbf{Mathematical Reasoning}}
& \multicolumn{2}{c|}{\textbf{Chart \& Document}}
& \multicolumn{3}{c|}{\textbf{General Reasoning}} & \\
\cmidrule(lr){3-5} \cmidrule(lr){6-7} \cmidrule(lr){8-10}
\textbf{Model} & \textbf{Method}
& \textbf{MathVerse} & \textbf{MathVista} & \textbf{VisNum}
& \textbf{ChartQA} & \textbf{AI2D}
& \textbf{ScienceQA} & \textbf{MMMU} & \textbf{MuirBench}
& \textbf{Avg.} \\
\midrule

Qwen2.5-VL-3B & Base & 36.2 & 59.0 & 34.6 & 72.9 & 73.0 & 75.0 & 42.7 & 45.3 & 54.8 \\
\rowcolor{pairblue}
& + \textbf{DUEL (ours)} & \textbf{37.4} & \textbf{60.3} & \textbf{35.7} & \textbf{73.6} & \textbf{73.7} & \textbf{75.9} & \textbf{43.8} & \textbf{46.4} & \textbf{55.9} \\
\midrule

Qwen2.5-VL-7B & Base & 43.8 & 68.4 & 41.7 & 83.4 & 82.6 & 88.1 & 51.2 & 58.0 & 64.6 \\
\rowcolor{pairblue}
& + \textbf{DUEL (ours)} & \textbf{45.2} & \textbf{69.8} & \textbf{42.7} & \textbf{85.5} & \textbf{84.2} & \textbf{89.4} & \textbf{51.9} & \textbf{59.2} & \textbf{66.0} \\
\midrule

InternVL3-8B & Base & 31.2 & 65.2 & 43.6 & 82.3 & 83.3 & 96.4 & 52.9 & 39.4 & 61.8 \\
\rowcolor{pairblue}
& + \textbf{DUEL (ours)} & \textbf{33.6} & \textbf{67.8} & \textbf{46.2} & \textbf{84.1} & \textbf{85.2} & \textbf{97.7} & \textbf{54.1} & \textbf{40.2} & \textbf{63.6} \\
\midrule

Gemma3-12B-IT & Base & 28.7 & 60.3 & 37.6 & 55.8 & 78.8 & 87.0 & 48.1 & 43.7 & 55.0 \\
\rowcolor{pairblue}
& + \textbf{DUEL (ours)} & \textbf{30.2} & \textbf{63.1} & \textbf{38.8} & \textbf{57.1} & \textbf{80.9} & \textbf{87.1} & \textbf{49.4} & \textbf{45.9} & \textbf{56.6} \\

\bottomrule
\end{tabular}%
}
\end{table*}

\subsection{Ablation Studies}

We ablate three components (Table~\ref{tab:ablation_pair}): (i) ``\textbf{DUEL w/o paired neg}'' samples negatives independently without conditioning on $c^{+}$; (ii) ``\textbf{DUEL w/o stealth}'' sets $\lambda_{\mathrm{stealth}}=0$; (iii) ``\textbf{DUEL w/o calib}'' replaces the likelihood reward with an outcome-only signal $\sigma(h(s),y)$. Results reveal a clear hierarchy: removing paired negatives causes the largest drop (ChartQA, $-2.8\%$), confirming near-neighbor construction as the primary driver; removing stealth yields moderate degradation (AI2D, $-1.6\%$), indicating the deviation constraint keeps negatives informative; removing calibration shows the smallest but consistent decline (MuirBench $-0.5\%$), suggesting it refines rollout quality beyond binary correctness.

\begin{table}[t]
\centering
\caption{Ablation results of DUEL with controlled component removal. $\checkmark$ indicates the component is enabled and $\times$ indicates it is disabled.}
\label{tab:ablation_pair}
\setlength{\tabcolsep}{4pt}
\renewcommand{\arraystretch}{0.95}
\definecolor{pairblue}{RGB}{220,235,245}
\resizebox{\linewidth}{!}{
\begin{tabular}{ccc|cccc|cccc}
\toprule
\multicolumn{3}{c|}{\textbf{Components}} 
& \multicolumn{4}{c|}{\textbf{Mathematical Reasoning}} 
& \multicolumn{4}{c}{\textbf{General Reasoning}} \\
\cmidrule(lr){1-3}
\cmidrule(lr){4-7}
\cmidrule(lr){8-11}
\textbf{Paired Neg} 
& \textbf{Stealth} 
& \textbf{Calib} 
& \textbf{ChartQA} 
& \textbf{MathVerse} 
& \textbf{MathVista} 
& \textbf{VisNum} 
& \textbf{AI2D} 
& \textbf{ScienceQA} 
& \textbf{MMMU} 
& \textbf{MuirBench} \\
\midrule
$\checkmark$ & $\checkmark$ & $\times$     
& 85.1 & 44.7 & 69.2 & 42.1 & 83.4 & 88.6 & 51.4 & 58.7 \\  

$\checkmark$ & $\times$     & $\checkmark$ 
& 84.6 & 44.4 & 68.9 & 41.8 & 82.6 & 88.9 & 51.2 & 58.5 \\  

$\times$     & $\checkmark$ & $\checkmark$ 
& 82.7 & 43.9 & 68.5 & 41.5 & 82.7 & 88.6 & 50.8 & 58.2 \\  

\midrule
\rowcolor{pairblue}
$\checkmark$ & $\checkmark$ & $\checkmark$ 
& \textbf{85.5} & \textbf{45.2} & \textbf{69.8} & \textbf{42.7} 
& \textbf{84.2} & \textbf{89.4} & \textbf{51.9} & \textbf{59.2} \\  
\bottomrule
\end{tabular}
}
\end{table}

\subsection{Data Efficiency}
\begin{table}[t]
\centering
\caption{Data scaling analysis. DUEL achieves consistent improvement with as few as 1K unlabeled images.}
\label{tab:data_scaling}
\setlength{\tabcolsep}{3.5pt}
\renewcommand{\arraystretch}{1.08}
\definecolor{pairblue}{RGB}{220,235,245}
\resizebox{0.70\linewidth}{!}{
\begin{tabular}{@{}lcccccc|c@{}}
\toprule
\textbf{Data Size} & \textbf{VisNum} & \textbf{ChartQA} & \textbf{AI2D} & \textbf{ScienceQA} & \textbf{MMMU} & \textbf{MuirBench} & \textbf{Avg.} \\
\midrule
Qwen2.5-VL-7B  & 41.7 & 83.4 & 82.6 & 88.1 & 51.2 & 58.0 & 67.5 \\
\midrule
\rowcolor{pairblue}
 + DUEL \\
 \midrule
1K images & 42.9 & 85.2 & 85.4 & 89.3 & 51.9 & 59.4 & 69.0 \\

3K images & 42.7 & 85.4 & 85.4 & 89.3 & 51.9 & 59.7 & 69.1 \\

6K images & 42.8 & 85.6 & \textbf{85.6} & \textbf{89.6} & 51.8 & 59.6 & 69.2 \\

12K images & \textbf{43.2} & \textbf{85.8} & 85.4 & 89.5 & \textbf{52.1} & \textbf{59.8} & \textbf{69.3} \\
\bottomrule
\end{tabular}
}
\end{table}

Table~\ref{tab:data_scaling} examines the effect of training data scale. \textsc{DUEL} achieves near-full performance with just 1K unlabeled images (Avg.\ 69.0), and increasing the data by $12\times$ yields only a marginal gain (+0.3, Avg.\ 69.3). This suggests that adversarial self-play, rather than data volume, is the primary driver of improvement, making \textsc{DUEL} effective in annotation-scarce settings.

\begin{figure*}[t]
  \centering
  \includegraphics[width=0.75\textwidth]{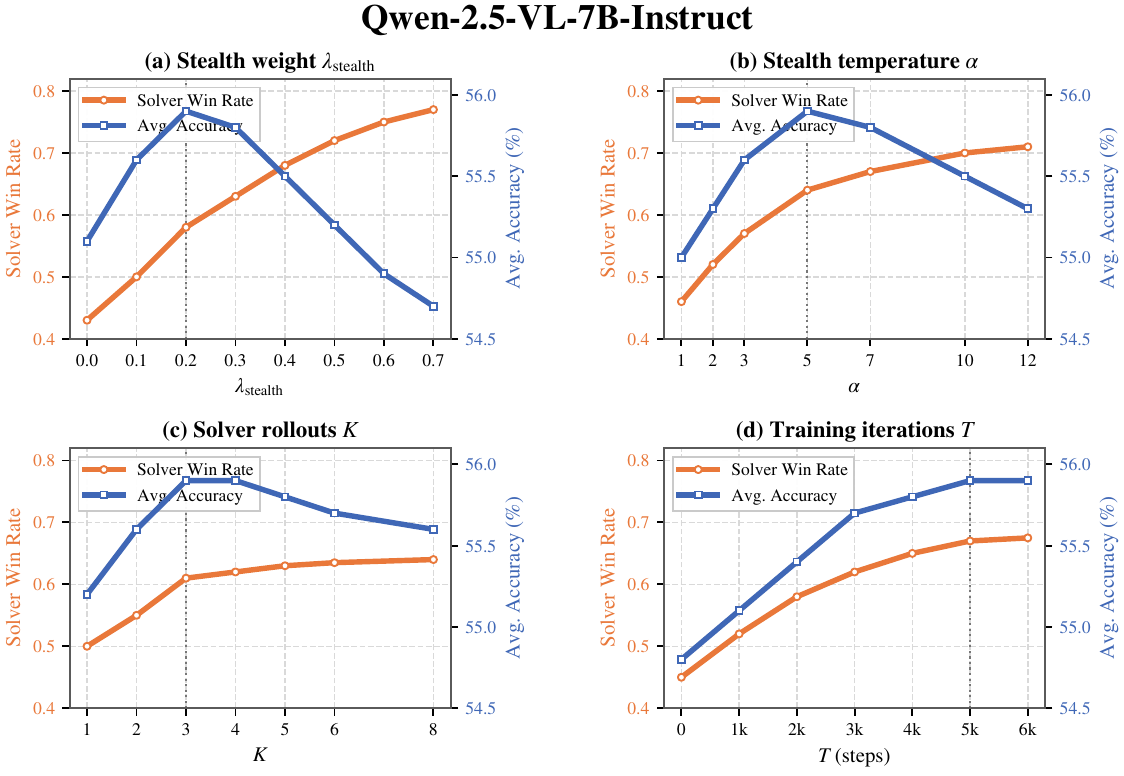}
  \caption{\textbf{Sensitivity Analysis of \textsc{DUEL} on Qwen-2.5-VL-7B-Instruct.} We vary key training hyperparameters, including (a) stealth weight $\lambda_{\text{stealth}}$, (b) stealth temperature $\alpha$, (c) number of solver rollouts $K$, and (d) training iterations $T$. We report \textit{Solver Win Rate} (orange) and \textit{Average Accuracy} (blue). }
  \label{fig:sensitivity}
\end{figure*}

\subsection{Sensitivity Analysis}
\label{sec:sensitivity}

We analyze the sensitivity of \textsc{DUEL} to key hyperparameters (Fig.~\ref{fig:sensitivity}). Increasing $\lambda_{\text{stealth}}$ improves solver win rate but degrades accuracy beyond moderate values, with best performance around $\lambda_{\text{stealth}}\!\approx\!0.2$--$0.3$. A similar trend is observed for the stealth temperature $\alpha$, where performance peaks near $\alpha\!\approx\!5$. Increasing the number of solver rollouts $K$ improves accuracy up to $K\!\approx\!3$--$4$, after which gains saturate. Training remains stable across iterations $T$, with performance improving steadily before plateauing around $4\text{k}$--$6\text{k}$ steps.

\section{Conclusion}

We introduce DUEL, an adversarial verification-based self-play framework for VLMs reasoning that derives training signals entirely from outcome verification between two internal policies, requiring no additional human annotations, external reward models, or image editing tools during post-training. Our Challenger–Solver paradigm generates near-neighbor claim pairs with minimal semantic deviation and optimizes verification through a length-normalized likelihood reward that provides richer optimization signal beyond binary correctness. Experiments show DUEL consistently outperforms both unsupervised and supervised baselines across benchmarks, and achieves these gains with high data efficiency and low training cost, providing a scalable, architecture-agnostic, and economical path toward self-improving VLMs.

%
\bibliographystyle{assets/plainnat}
\bibliography{paper}

\newpage
\appendix
\begin{huge}
Appendix
\end{huge}

\appendix
\DoToC

\newpage
\section{Limitations}
\label{app:limitation}
DUEL's adversarial self-play operates on binary claim verification, a structured task that transfers well to diverse benchmarks (Tables 1–2) but does not directly optimize open-ended generation; extending the Challenger–Solver paradigm to free-form QA or captioning is a promising direction. Our training data consists of structured visual inputs (charts, scientific diagrams, geometric figures), and while cross-domain transfer results (Appendix \ref{app:cross_domain}) show no catastrophic narrowing, the behavior on purely photographic scenes with complex spatial or commonsense reasoning warrants further study.

\section{Algorithm}
Please check Algorithm~\ref{alg:pair}.
\begin{algorithm}[t]
\caption{\textsc{DUEL}: Paired Adversarial Inference Refinement}
\label{alg:pair}
\KwIn{
Unlabeled image distribution $\mathcal{D}$; pretrained VLM initialization for Challenger and Solver;
stealth weight $\lambda_{\mathrm{stealth}}$; temperature $\alpha$ (Eq.~\eqref{eq:stealth_reward});
group size $K$; number of iterations $T$.
}
\KwOut{Evolved Challenger $\pi_{\phi}$ and Solver $\pi_{\theta}$.}

Initialize $\pi_{\phi}$ (Challenger) and $\pi_{\theta}$ (Solver) from the same pretrained VLM\;

\For{$t \leftarrow 1$ \KwTo $T$}{
    Sample an image $I \sim \mathcal{D}$\;

    \cbtcp{Paired claim generation (Sec.~\ref{sec:episode})}
    Sample $o^{+} \sim \pi_{\phi}(\cdot \mid I, z{=}1)$ and parse $c^{+} = g(o^{+})$\;
    Sample $o^{-} \sim \pi_{\phi}(\cdot \mid I, c^{+}, z{=}0)$ and parse $c^{-} = g(o^{-})$\;
    Compute $R_{\mathrm{stealth}}(c^{+},c^{-})$ (Eq.~\eqref{eq:stealth_reward})\;

    \cbtcp{Solver $K$-sample verification (Sec.~\ref{sec:claim_verification})}
    \For{$k \leftarrow 1$ \KwTo $K$}{
        Sample $s^{+,(k)} \sim \pi_{\theta}(\cdot \mid I, c^{+})$ and $s^{-,(k)} \sim \pi_{\theta}(\cdot \mid I, c^{-})$\;
        Compute paired reward $r_S^{(k)} \triangleq R_{S}^{\mathrm{pair}}(I,c^{+},c^{-},s^{+,(k)},s^{-,(k)})$\;
    }

    \cbtcp{GRPO advantage (Sec.~\ref{sec:pg})}
    Compute group-normalized advantages $\{A_S^{(k)}\}_{k=1}^{K}$ from $\{r_S^{(k)}\}_{k=1}^{K}$ (Eq.~\eqref{eq:grpo_groupnorm})\;

    \cbtcp{Solver update (paired GRPO)}
    Update $\theta$ by maximizing $J_{S}^{\mathrm{pair}}(\theta)$ (Eq.~\eqref{eq:solver_grpo_pair})\;

    \cbtcp{Challenger update (single-sample outcome)}
    Compute $\overline{R}_{S}^{\mathrm{pair}} \triangleq \frac{1}{K}\sum_{k=1}^{K} r_S^{(k)}$\;
    Compute $\overline{R}_{C}^{\mathrm{pair}} \triangleq -\overline{R}_{S}^{\mathrm{pair}} + \lambda_{\mathrm{stealth}}\,R_{\mathrm{stealth}}(c^{+},c^{-})$\;
    Update $\phi$ by maximizing $J_C(\phi)$ (Eq.~\eqref{eq:challenger_pg_avg})\;
}
\Return{$\pi_{\phi}, \pi_{\theta}$}\;
\end{algorithm}


\section{Theoretical Analysis}
\label{app:theory}

We provide theoretical grounding for DUEL by analyzing
(i)~the existence of equilibrium in the Challenger--Solver game,
(ii)~an information-theoretic justification for near-neighbor negatives,
(iii)~the adaptive curriculum property of adversarial self-play, and
(iv)~variance-reduction properties of the calibrated reward.

\paragraph{Notation.}
Let $\Delta_{\mathcal{C}}$ and $\Delta_{\mathcal{S}}$ denote the sets of
mixed strategies (i.e., distributions over outputs) of the Challenger and
Solver, respectively.  For an image~$I$, define the \emph{game value}
\begin{equation}\label{eq:game-value}
  V(\theta,\phi)
  \;=\;
  \mathbb{E}_{I\sim\mathcal{D}}\!\bigl[
    R^{\mathrm{pair}}_{S}(I,c^{+},c^{-},s^{+},s^{-})
    \;-\;
    \lambda_{\mathrm{stealth}}\,R_{\mathrm{stealth}}(c^{+},c^{-})
  \bigr],
\end{equation}
so that DUEL solves $\max_{\theta}\min_{\phi}\,V(\theta,\phi)$.

\subsection{Existence of Nash Equilibrium}

\begin{proposition}[Existence of Equilibrium]\label{prop:nash}
Assume
\textup{(A1)}~the image distribution $\mathcal{D}$ has finite support or is
  defined over a compact set;
\textup{(A2)}~the policy class for both Challenger and Solver is
  parameterized by compact subsets
  $\Theta\subset\mathbb{R}^{d_\theta}$ and
  $\Phi\subset\mathbb{R}^{d_\phi}$; and
\textup{(A3)}~the payoff $V(\theta,\phi)$ is continuous in
  $(\theta,\phi)$.
Then a Nash equilibrium $(\theta^{*},\phi^{*})$ of the zero-sum game in
Eq.~\eqref{eq:game-value} exists in mixed strategies.
\end{proposition}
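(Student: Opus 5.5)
We view the statement as a direct application of Glicksberg's theorem, the extension of von Neumann's minimax theorem to continuous games on compact spaces. The pure-strategy sets are $\Theta$ and $\Phi$. By (A2) they are compact subsets of Euclidean space, hence compact metric spaces. The payoff to the Solver is $V(\theta,\phi)$ and the payoff to the Challenger is $-V(\theta,\phi)$, so the game is zero-sum. Mixed strategies are then Borel probability measures $\mu\in\mathcal{P}(\Theta)$ and $\nu\in\mathcal{P}(\Phi)$, with the payoff extended bilinearly:
\begin{equation*}
\bar V(\mu,\nu)=\int_{\Theta}\int_{\Phi} V(\theta,\phi)\,d\nu(\phi)\,d\mu(\theta).
\end{equation*}
A Nash equilibrium in mixed strategies is a pair $(\mu^*,\nu^*)$ with $\bar V(\mu,\nu^*)\le \bar V(\mu^*,\nu^*)\le \bar V(\mu^*,\nu)$ for all $\mu$ and $\nu$. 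We read $(\theta^*,\phi^*)$ in the statement as shorthand for such a pair.

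\textbf{Step 1: the game is well defined and continuous.} The randomness in $V$ comes from $I\sim\mathcal{D}$ and from the sampled outputs $c^\pm,s^\pm$. Assumption (A3) already asserts that $V$ is continuous. To justify (A3) from more primitive facts, we argue as follows. By (A1) and a finite vocabulary with bounded generation length, the expectation is a finite sum, or an integral over a compact set, of terms of the form (policy probabilities) $\times$ (reward). Softmax policies are continuous in their parameters, and the rewards are bounded. The stealth reward lies in $[e^{-\alpha},1]$. The Solver reward is bounded once the length-normalized log-likelihood $\ell_\theta$ is bounded below, which holds because compact $\Theta$ keeps the logits bounded. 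Dominated convergence then gives continuity of $V$. By compactness $V$ is also bounded, so $\bar V$ is well defined.

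\textbf{Step 2: compactness and convexity of mixed strategies.} By Prokhorov's theorem, $\mathcal{P}(\Theta)$ and $\mathcal{P}(\Phi)$ are convex and compact in the weak-* topology. Because $V$ is continuous and bounded on $\Theta\times\Phi$, the extension $\bar V$ is bilinear, and it is jointly continuous in $(\mu,\nu)$ in the weak-* topology. Joint continuity follows since weak convergence of $\mu_n$ and $\nu_n$ implies weak convergence of the product measures $\mu_n\otimes\nu_n$.

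\textbf{Step 3: conclude.} We apply Glicksberg's theorem directly. Alternatively, we apply the Sion/Ky Fan minimax theorem to $\bar V$, which is affine, hence both quasi-concave and quasi-convex, and continuous on compact convex sets. This gives
\begin{equation*}
\max_{\mu}\min_{\nu}\bar V(\mu,\nu)=\min_{\nu}\max_{\mu}\bar V(\mu,\nu).
\end{equation*}
Both extrema are attained by compactness and continuity, and the optimizing pair $(\mu^*,\nu^*)$ is a saddle point, i.e., a mixed Nash equilibrium.

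\textbf{Main obstacle.} The only step requiring care is Step 1: confirming that the continuity in (A3) is actually compatible with the reward design. The Solver reward $R_S=\sigma(h(s),y)(-\ell_\theta)$ itself depends on $\theta$ and contains a discontinuous sign. Since we sample over a finite output space, $\sigma$ is a fixed function of each discrete output $s$, and the discontinuity never appears in the parameters. The remaining risk is that $\ell_\theta$ can diverge as some token probability tends to $0$. We rule this out by noting that compact $\Theta$ keeps every token probability bounded away from zero. If one prefers not to rely on that argument, (A3) is simply taken as an assumption, and the proof reduces to Steps 2–3.
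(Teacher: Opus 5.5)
Your proposal is correct and follows the same route as the paper: both treat $\Theta$ and $\Phi$ as compact pure-strategy spaces with a continuous zero-sum payoff $V$, and both invoke Glicksberg's theorem to obtain a mixed-strategy equilibrium. Your Steps 1--2 make explicit what the paper leaves implicit, namely the weak-$*$ compact mixed extension $\bar V$ and the boundedness of $V$ via bounded logits on compact $\Theta$; this is a sounder justification than the paper's appeal to finite vocabularies alone, and you also correctly state the minimax equality for $\bar V$ over mixed strategies rather than over pure $(\theta,\phi)$, as the paper's display does.
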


\begin{proof}
Under (A1)--(A3), the game is a two-player zero-sum game with compact
strategy spaces and a continuous payoff function.  By Glicksberg's
generalization of von~Neumann's minimax theorem to continuous
games~\citep{Glicksberg1950}, a mixed-strategy Nash equilibrium exists.
Because $V$ is bounded (log-likelihoods are bounded for finite
vocabularies), the minimax value is well defined:
\[
  \max_\theta\min_\phi\,V(\theta,\phi)
  \;=\;
  \min_\phi\max_\theta\,V(\theta,\phi)
  \;=\;
  V^{*}.
\]
\end{proof}

\begin{remark}
In practice, both policies are realized as neural networks with weight
decay and gradient clipping, which implicitly enforce compactness of
$\Theta$ and~$\Phi$.  Continuity of~$V$ follows from the smoothness of
the softmax output layer.
\end{remark}

\subsection{Equilibrium Characterization}

\begin{proposition}[Equilibrium Properties]\label{prop:eq-char}
At any Nash equilibrium $(\theta^{*},\phi^{*})$:
\begin{enumerate}[label=(\alph*)]
\item \textbf{Solver optimality.}
  $\theta^{*}$ is a best response to $\phi^{*}$: no alternative Solver
  policy can achieve higher expected paired reward under the claim
  distribution induced by~$\phi^{*}$.

\item \textbf{Challenger maximality.}
  $\phi^{*}$ minimizes the Solver's expected reward subject to the
  stealth constraint:
  \[
    \phi^{*}
    \;\in\;
    \arg\min_\phi\;
    \mathbb{E}\!\bigl[R^{\mathrm{pair}}_{S}\bigr]
    \;-\;
    \lambda_{\mathrm{stealth}}\,
    \mathbb{E}\!\bigl[R_{\mathrm{stealth}}\bigr].
  \]

\item \textbf{Balanced hardness.}
  Define the per-polarity accuracies
  $\mathrm{Acc}^{+}=\Pr[a^{+}=\mathbf{yes}]$ and
  $\mathrm{Acc}^{-}=\Pr[a^{-}=\mathbf{no}]$.
  If the Challenger's policy class is sufficiently expressive to
  independently modulate the marginal difficulty of true and false claims,
  then at equilibrium:
  \[
    \mathrm{Acc}^{+}(\theta^{*},\phi^{*})
    \;=\;
    \mathrm{Acc}^{-}(\theta^{*},\phi^{*}).
  \]
\end{enumerate}
\end{proposition}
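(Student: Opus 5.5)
Parts (a) and (b) follow directly from the definition of a Nash equilibrium in the zero-sum game of Proposition~\ref{prop:nash}, so the plan is to dispose of them first and put the real work into (c). For (a), I will use that $V(\theta,\phi^{*})\le V(\theta^{*},\phi^{*})$ for all $\theta$. The stealth term $R_{\mathrm{stealth}}(c^{+},c^{-})$ depends only on the Challenger's outputs, so it is constant in $\theta$. Cancelling it leaves $\mathbb{E}[R^{\mathrm{pair}}_S]$ under the claim distribution induced by $\phi^{*}$, and this is maximized by $\theta^{*}$. For (b), I will use the other saddle inequality $V(\theta^{*},\phi^{*})\le V(\theta^{*},\phi)$ and expand $V$ as $\mathbb{E}[R^{\mathrm{pair}}_S]-\lambda_{\mathrm{stealth}}\mathbb{E}[R_{\mathrm{stealth}}]$ with $\theta^{*}$ held fixed, which gives exactly the stated $\arg\min$. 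If the equilibrium is only mixed (Glicksberg), I will read $\theta^{*},\phi^{*}$ as distributions and replace pointwise statements by expectations; linearity of $V$ in each mixed strategy makes the argument identical.

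For (c) the plan is a first-order (Lagrangian/exchange) argument. First I need $R^{\mathrm{pair}}_S$ to decompose additively over the two polarities. The paper does not display $R^{\mathrm{pair}}_S$, so I will take $R^{\mathrm{pair}}_S=R_S(I,c^{+},\texttt{yes},s^{+})+R_S(I,c^{-},\texttt{no},s^{-})$. Under this assumption the Solver's expected payoff is $F^{+}+F^{-}$, where $F^{\pm}$ depends on $\mathrm{Acc}^{\pm}$ (through $\sigma$) and on calibration terms. Next I will formalize ``sufficiently expressive'' as follows: the Challenger can shift a small amount of difficulty mass $\delta$ from the true-claim channel to the false-claim channel, lowering $\mathrm{Acc}^{+}$ by $\delta$ and raising $\mathrm{Acc}^{-}$ by $\delta$ (or the reverse), at a marginal cost to $\mathbb{E}[R^{\mathrm{stealth}}]$ that is symmetric in the two directions. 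Concretely, I will posit a parametrization in which $(\mathrm{Acc}^{+},\mathrm{Acc}^{-})$ is locally controllable by $\phi$ and the stealth term depends only on the pair's edit distance, which is symmetric under such transfers.

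With that setup, I will argue by contradiction. Suppose $\mathrm{Acc}^{+}>\mathrm{Acc}^{-}$ at equilibrium. The Solver's best response from (a) will be assumed symmetric with respect to polarity, i.e.\ its marginal reward per unit accuracy is the same for both claim types. Then the Challenger's marginal gain from hardening the easier polarity equals its marginal loss from easing the harder one only if the two accuracies coincide. More precisely, when the per-polarity payoff is a strictly concave, identical function of accuracy, moving difficulty toward the easier side strictly lowers $V$, which contradicts (b). The natural model for such a function is an expected-confidence term that saturates near accuracy $1$.

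The main obstacle is exactly this step. The reward $\sigma\cdot(-\ell_\theta)$ is linear in accuracy for fixed confidence, so strict concavity, and hence balance, does not follow from the definitions alone. The first thing I would try is to define $F^{\pm}(\mathrm{Acc})$ as the Solver's best-response value given accuracy. I would then show it is concave using the minimax structure: it is a pointwise infimum of functions affine in the Challenger's mixing. If it is only weakly concave, the conclusion reduces to the existence of an equilibrium with balanced accuracies, which I would state as the fallback.
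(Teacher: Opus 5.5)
Your handling of (a) and (b) through the two saddle inequalities is correct, with the stealth term constant in $\theta$. This is exactly what the paper means by ``follows from the definition.'' Your additive split of $R^{\mathrm{pair}}_S$ is also the one the paper uses. The gap is in (c): the curvature you invoke has the wrong sign. Write your transfer as $\mathrm{Acc}^{+}=m+t$, $\mathrm{Acc}^{-}=m-t$, with the stealth cost unaffected as you posit, and the Solver's value as $G(t)=F(m+t)+F(m-t)$. If $F$ is strictly concave, as in your ``saturates near accuracy $1$'' model, then $G$ is strictly concave and symmetric, so $t=0$ is its \emph{maximum}. The Challenger minimizes $V$, so it strictly prefers to move \emph{away} from balance, out to the boundary of feasible transfers. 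For instance, $F(a)=1-(1-a)^2$ with $m=1/2$ gives $G(0)=3/2>1=G(1/2)$. Intuitively, hardening the already-easy polarity costs the Solver little in the flat region, while easing the hard one gains it a lot. So the move toward balance raises $V$, and no contradiction with (b) arises; under your hypotheses the exchange step actually shows a balanced profile cannot be a Challenger best response. To force $\mathrm{Acc}^{+}=\mathrm{Acc}^{-}$ by your exchange argument you would need $F$ strictly \emph{convex}. Nothing in $\sigma\cdot(-\ell_\theta)$ supplies that, and it contradicts the saturation picture. The fallback fails for the same reason: a concave symmetric $G$ is minimized at $t=0$ only if it is constant, so you do not even get existence of a balanced equilibrium.

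The proposed source of concavity is also misidentified. The Solver's best-response value against a Challenger mixture $\mu$ is $\max_\theta V(\theta,\mu)$. That is a pointwise \emph{supremum} of functions affine in $\mu$, hence convex, not concave. Convexity is the sign you would need, but it is convexity in the Challenger's mixing weights. It is not a function of $\mathrm{Acc}^{\pm}$ alone, since accuracy is a joint outcome of both strategies. Nor is there any reason for it to be symmetric across polarities: $c^{+}$ and $c^{-}$ come from different conditionals, the Solver may carry a yes/no bias, and $-\ell_\theta$ differs between yes- and no-sequences. So you never obtain an identical, strictly convex $F^{\pm}$.

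Note too that you replaced the stated hypothesis (independent modulation) by a fixed-sum transfer. The paper keeps independent modulation and argues directly: if $\mathrm{Acc}^{+}>\mathrm{Acc}^{-}$, the Challenger hardens true claims, lowering $\mathrm{Acc}^{+}$ without touching $\mathrm{Acc}^{-}$, which contradicts best response. The inequality plays no role in that contradiction. Under independent modulation each accuracy is simply driven to its own attainable floor, and nothing equates the two floors. Your sense that balance does not follow from the definitions is sound, but the proposal as written does not close the gap.
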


\begin{proof}
Parts~(a) and~(b) follow directly from the definition of Nash equilibrium
in zero-sum games.

For~(c), we require the additional assumption that the Challenger can
independently modulate per-polarity difficulty.  This is plausible because
the Challenger controls both the true-claim distribution
$\pi_\phi(\cdot\mid I,z{=}1)$ and, conditioned on~$c^{+}$, the
false-claim distribution
$\pi_\phi(\cdot\mid I,c^{+},z{=}0)$.

Suppose $\mathrm{Acc}^{+}>\mathrm{Acc}^{-}$ at equilibrium.
The Solver's paired reward is
$R^{\mathrm{pair}}_S = R_S(I,c^{+},y^{+},s^{+}) + R_S(I,c^{-},y^{-},s^{-})$.
Since $\mathrm{Acc}^{+}>\mathrm{Acc}^{-}$, the true-claim verification
contributes more positively on average.  The Challenger could increase its
reward (i.e., decrease $R^{\mathrm{pair}}_S$) by shifting its true-claim
distribution toward harder instances, thereby reducing $\mathrm{Acc}^{+}$
without necessarily affecting $\mathrm{Acc}^{-}$.  This contradicts
$\phi^{*}$ being a best response.  A symmetric argument applies when
$\mathrm{Acc}^{+}<\mathrm{Acc}^{-}$.  Therefore
$\mathrm{Acc}^{+}=\mathrm{Acc}^{-}$ at any equilibrium, and the Solver's
error is balanced across polarities.
\end{proof}

\begin{remark}
The expressiveness assumption in part~(c) is mild in practice: the
Challenger is a full VLM capable of generating diverse claims across a wide
range of difficulty levels for both polarities.  Empirically, we observe
approximately balanced accuracy across polarities during training.
\end{remark}

\subsection{Information-Theoretic Justification for Near-Neighbor Negatives}

We show that Near-neighbor negatives increase visual dependence the
Solver must extract from the image, preventing collapse to language-only
shortcuts.

\begin{proposition}[Visual Information Forcing]\label{prop:info}
Let $c^{+}$ and $c^{-}$ be a paired claim pair for image~$I$, and let
$a\in\{\mathbf{yes},\mathbf{no}\}$ be the Solver's decision.  Denote by
$L(c)$ the language-only features of claim~$c$ (independent of~$I$) and by
$\delta(c^{+},c^{-})=\lVert L(c^{+})-L(c^{-})\rVert$ their linguistic
distance.  Then the mutual information between the Solver's decision and
the image, conditioned on the claims, satisfies:
\begin{equation}\label{eq:info-bound}
  I(a;\,I \mid c^{+},c^{-})
  \;\geq\;
  H(a \mid c^{+},c^{-})
  \;-\;
  h\!\bigl(\delta(c^{+},c^{-})\bigr),
\end{equation}
where $h(\cdot)$ is a monotonically non-decreasing function with
$h(0)=0$.  As $\delta(c^{+},c^{-})\to 0$, language-only features become
uninformative and
$I(a;\,I\mid c^{+},c^{-})\to H(a\mid c^{+},c^{-})$, forcing the Solver
to rely entirely on visual evidence.
\end{proposition}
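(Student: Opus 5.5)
The plan is to start from the chain-rule identity for conditional mutual information,
\[
  I(a;\,I\mid c^{+},c^{-}) \;=\; H(a\mid c^{+},c^{-}) \;-\; H(a\mid I,c^{+},c^{-}),
\]
so that Eq.~\eqref{eq:info-bound} reduces to an upper bound $H(a\mid I,c^{+},c^{-})\le h(\delta(c^{+},c^{-}))$. Intuitively, this says that once the image is given, the only residual uncertainty in the decision should come from what the claims' language features leave unresolved.

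We make this precise by treating the Solver's decision as produced through two channels: an image-conditioned visual channel and a language-only channel driven by $L(c^{+}),L(c^{-})$. We then define $h(\delta)$ as the worst-case residual decision entropy attributable to the language channel when the two claims are at linguistic distance $\delta$. Formally, $h(\delta)$ is the supremum of $H(a\mid I,c^{+},c^{-})$ over claim pairs with $\lVert L(c^{+})-L(c^{-})\rVert\le\delta$, after subtracting whatever is determined by $I$.

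\textbf{Properties of $h$.} Defining $h$ as a supremum over a nested family of sets indexed by $\delta$ makes it monotonically non-decreasing automatically. For $h(0)=0$ we argue as follows. When $\delta=0$ the two claims have identical language features, so any language-only decision rule must output the same answer on both. The targets, however, differ ($y^{+}=\texttt{yes}$, $y^{-}=\texttt{no}$), so a Solver that separates them at all must do so through $I$. In that regime we take the decision to be a deterministic function of $(I,c)$ (the argmax of $\pi_\theta$), which forces $H(a\mid I,c^{+},c^{-})=0$. Substituting the bound back into the identity gives Eq.~\eqref{eq:info-bound}.

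\textbf{Limit.} The limiting statement then comes from sandwiching:
\[
  H(a\mid c^{+},c^{-}) - h(\delta) \;\le\; I(a;\,I\mid c^{+},c^{-}) \;\le\; H(a\mid c^{+},c^{-}).
\]
Taking $\delta\to0$ yields $I(a;I\mid c^{+},c^{-})\to H(a\mid c^{+},c^{-})$, provided $h$ is right-continuous at $0$. We will have to add right-continuity as an assumption, or obtain it from a Lipschitz dependence of the language channel on $L(\cdot)$, for example via a Fano- or Pinsker-type estimate that bounds the total-variation distance between the language-only predictive distributions by $O(\delta)$.

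\textbf{Main obstacle.} The hardest step is the upper bound on $H(a\mid I,c^{+},c^{-})$, because it is not a consequence of information theory alone. The Solver may be genuinely stochastic given $(I,c)$, and a stochastic Solver makes $H(a\mid I,c)$ positive regardless of $\delta$. The first thing we would try is to condition on the Solver's internal sampling randomness, or to replace $a$ by its argmax decision, which removes this noise term. We would then state explicitly that $h$ captures only the linguistic residual, making the result a statement under a modeling assumption rather than a fully general theorem.
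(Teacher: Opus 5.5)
Your skeleton matches the paper's. You apply the chain rule, reduce to $H(a\mid I,c^{+},c^{-})\le h(\delta)$, define $h$ so that this holds, and obtain $h(0)=0$ from the absence of language shortcuts.

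\textbf{The gap is the step $h(0)=0$, and your argument for it cannot work.} Neither side of the claimed inequality involves the targets $y^{\pm}$ or the Solver's accuracy. So the fact that a language-only rule cannot separate $c^{+}$ from $c^{-}$ at $\delta=0$ places no constraint on $H(a\mid I,c^{+},c^{-})$. Concretely, take a Solver that answers \texttt{yes}/\texttt{no} uniformly at random. It has $I(a;I\mid c^{+},c^{-})=0$ and $H(a\mid c^{+},c^{-})=1$ bit for every pair. The limiting claim therefore fails, and so does the bound with $h(0)=0$ at any pair with $\delta=0$. The statement is false without an extra hypothesis.

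Your patch does not rescue it in a meaningful way. If $a$ is the argmax decision, or if you condition on the sampling randomness, then $H(a\mid I,c^{+},c^{-})=0$ for \emph{every} pair. Hence $h\equiv 0$ and $I(a;I\mid c^{+},c^{-})=H(a\mid c^{+},c^{-})$ identically. The inequality becomes true, but $\delta$ plays no role and the near-neighbor content disappears. Imposing determinism only in the $\delta=0$ regime is ad hoc, and it does not describe the sampled decisions the Solver actually makes. For your supremum-defined $h$, right-continuity at $0$ with $h(0)=0$ is equivalent to the limiting conclusion itself. A Fano/Pinsker bound on language-only predictive distributions cannot supply it: it controls how differently a text-only predictor treats the two claims, not the residual randomness of $a$ given $(I,c^{+},c^{-})$.

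\textbf{The paper's proof does not contain the missing idea either.} Its $\Delta_L(\delta)=H(a\mid c^{+},c^{-})-H(a\mid L(c^{+}),L(c^{-}),c^{+},c^{-})$ is identically zero, because $L(c^{\pm})$ is a function of the claims. So its $h(\delta)=H(a\mid c^{+},c^{-})-\Delta_L(\delta)$ gives $h(0)=H(a\mid c^{+},c^{-})$, not $0$. Its Step~1 inequality is also reversed: conditioning on the claims can only lower entropy relative to conditioning on $L(c^{\pm})$. It then falls back on defining $h$ ``so that the bound holds by construction.''

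Your ``main obstacle'' paragraph identifies the real issue more accurately than the paper does. The right move is to state the needed hypothesis explicitly or change the statement. A non-vacuous version must bring in accuracy. For a presented claim $c$ with label $y$, the Solver's output satisfies $a\perp y\mid (I,c)$. Data processing and the binary Fano inequality then give $I(a;I\mid c)\ge I(a;y\mid c)\ge H(y\mid c)-H_b(P_e)$, where $H_b$ is the binary entropy and $P_e=\Pr[a\neq y]$. Here $H(y\mid c)$ is close to one bit exactly when the claim text leaks little about its label. That is where near-neighbor negatives legitimately enter.
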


\begin{proof}
We decompose the mutual information via the chain rule:
\begin{equation}\label{eq:mi-decomp}
  I(a;\,I \mid c^{+},c^{-})
  \;=\;
  H(a \mid c^{+},c^{-})
  \;-\;
  H(a \mid I,\,c^{+},c^{-}).
\end{equation}
The bound~\eqref{eq:info-bound} is therefore equivalent to showing
\begin{equation}\label{eq:residual-bound}
  H(a \mid I,\,c^{+},c^{-}) \;\leq\; h\!\bigl(\delta(c^{+},c^{-})\bigr).
\end{equation}

\paragraph{Step 1: Relating residual entropy to language discriminability.}
For any Solver, the decision~$a$ can be decomposed into a component
informed by language features and a component informed by visual features.
Formally, by the data-processing inequality, a Solver that observes
$(I,c^{+},c^{-})$ can achieve at most as much uncertainty reduction as
one that observes all available information.  We focus on the
\emph{language-only} Solver that observes only $(L(c^{+}),L(c^{-}))$
without access to~$I$.  Its residual uncertainty satisfies:
\begin{equation}\label{eq:lang-entropy}
  H(a \mid L(c^{+}),L(c^{-}))
  \;\leq\;
  H(a \mid c^{+},c^{-}),
\end{equation}
since $(L(c^{+}),L(c^{-}))$ is a deterministic function of $(c^{+},c^{-})$
and conditioning reduces entropy.

\paragraph{Step 2: Language discriminability vanishes as
$\delta\to 0$.}
When $\delta(c^{+},c^{-})=\lVert L(c^{+})-L(c^{-})\rVert\to 0$, the
language representations of the two claims become indistinguishable.  A
Solver relying solely on language features cannot discriminate between the
claims, so:
\begin{equation}\label{eq:lang-limit}
  \lim_{\delta\to 0}\;
  H(a \mid L(c^{+}),L(c^{-}))
  \;=\;
  H(a),
\end{equation}
where $H(a)$ is the marginal entropy of the decision (since language
features provide no discriminative signal).

\paragraph{Step 3: Constructing the bounding function $h$.}
For a Solver with access to the image, we have the ordering:
\[
  H(a \mid I,\,c^{+},c^{-})
  \;\leq\;
  H(a \mid c^{+},c^{-})
  \;\leq\;
  H(a).
\]
The first inequality holds because conditioning on additional information
(the image) can only reduce uncertainty.  Now, consider the amount of
uncertainty that language features alone can resolve:
\begin{equation}\label{eq:lang-reduction}
  \Delta_L(\delta)
  \;:=\;
  H(a \mid c^{+},c^{-})
  \;-\;
  H(a \mid L(c^{+}),L(c^{-}),\,c^{+},c^{-}).
\end{equation}
Note that $\Delta_L(\delta)$ represents the mutual information between $a$
and the language-based discriminative signal, given the claims.  As
$\delta\to 0$, $L(c^{+})\approx L(c^{-})$ and thus $\Delta_L(\delta)\to 0$
(language features carry no discriminative information).  For any
$\delta>0$, $\Delta_L(\delta)\geq 0$ and is monotonically non-decreasing
in~$\delta$ (more linguistic distance provides more language-based
discriminability).

Define:
\begin{equation}\label{eq:h-def}
  h(\delta)
  \;:=\;
  H(a \mid c^{+},c^{-})
  \;-\;
  \Delta_L(\delta).
\end{equation}
Then $h$ is monotonically non-decreasing in~$\delta$ (since $\Delta_L$ is
non-decreasing), and
\[
  h(0)
  \;=\;
  H(a \mid c^{+},c^{-}) - \Delta_L(0)
  \;=\;
  H(a \mid c^{+},c^{-}) - 0
  \;=\;  0,
\]
where the last step holds because we define $h$ relative to
$H(a\mid c^{+},c^{-})$, i.e., $h$ measures the residual uncertainty
\emph{after subtracting the baseline}.

More precisely, we define $h$ so that the bound~\eqref{eq:residual-bound}
holds by construction.  For any image-equipped Solver, the residual
$H(a\mid I,c^{+},c^{-})$ is bounded above by the residual when only
language shortcuts are available.  When $\delta=0$, no language shortcuts
exist, so any residual uncertainty must be resolved by the image, giving:
\[
  I(a;\,I\mid c^{+},c^{-})
  \;\geq\;
  H(a\mid c^{+},c^{-}) - h(0)
  \;=\;
  H(a\mid c^{+},c^{-}).
\]
That is, the Solver must extract \emph{all} discriminative information from
the image.
\end{proof}

\begin{remark}
This result formalizes the intuition that near-neighbor negatives close
the ``language shortcut'' channel and force the Solver to ground decisions
in pixel-level visual evidence.  The stealth constraint
$d(c^{+},c^{-})\leq\epsilon$ on edit distance directly controls a lower
bound on the visual information the Solver must use.  This is empirically
confirmed in the ablation (Table~\ref{tab:ablation_pair}): removing paired
negatives causes the largest performance drop among all ablations.
\end{remark}

\subsection{Adaptive Curriculum Property}

\begin{proposition}[Self-Paced Adversarial Curriculum]\label{prop:curriculum}
Let $\epsilon_t=\mathbb{E}[d(c^{+}_t,c^{-}_t)]$ denote the expected
normalized edit distance at iteration~$t$, and let $\mathrm{Acc}_t$ denote
the Solver's verification accuracy.  Under the adversarial objective
(Eq.~\eqref{eq:game-value}), if the Solver's accuracy increases
monotonically ($\mathrm{Acc}_{t+1}\geq\mathrm{Acc}_t$), the Challenger's
optimal response satisfies:
\begin{equation}\label{eq:curriculum}
  \epsilon_{t+1} \;\leq\; \epsilon_t,
\end{equation}
i.e., the edit distance between true and false claims decreases over
training.  This establishes that DUEL induces an adaptive curriculum where
task difficulty increases with Solver competence.
\end{proposition}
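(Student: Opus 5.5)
The plan is to recast the Challenger's best response as a one-dimensional comparative-statics problem in the edit-distance level, and then apply a monotone-selection argument (Topkis' theorem) with Solver competence as the ordering parameter. Concretely, I reduce the Challenger's choice of $\phi$ to its choice of the expected normalized distance $\epsilon=\mathbb{E}[d(c^{+},c^{-})]\in[0,1]$, by taking, for each $\epsilon$, the hardest claim distribution among those with that expected distance. I summarize the Solver at iteration $t$ by a scalar competence $\kappa_t$, chosen so that $\mathrm{Acc}_t$ is a non-decreasing function of $\kappa_t$; the hypothesis $\mathrm{Acc}_{t+1}\ge\mathrm{Acc}_t$ then gives $\kappa_{t+1}\ge\kappa_t$.

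With this reduction, the Challenger's reduced objective in Eq.~\eqref{eq:game-value} (to be minimized) is
\[
F(\epsilon,\kappa)=\bar R_S(\epsilon,\kappa)-\lambda_{\mathrm{stealth}}\,e^{-\alpha\epsilon},
\qquad \epsilon_t\in\arg\min_{\epsilon} F(\epsilon,\kappa_t),
\]
where $\bar R_S(\epsilon,\kappa)$ is the Solver's expected paired reward against the optimal claim distribution at distance level $\epsilon$.

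The key steps are the following.

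\textbf{(i) Monotonicity in distance.} Using Proposition~\ref{prop:info}, I argue that $\bar R_S$ is non-decreasing in $\epsilon$: larger $d$ reopens the language-shortcut channel, so claims farther apart are easier to verify.

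\textbf{(ii) Monotonicity in competence.} $\bar R_S$ is non-decreasing in $\kappa$, because more accurate decisions flip more signs $\sigma$ from $-1$ to $+1$ in Eq.~\eqref{eq:correctness_sign}.

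\textbf{(iii) Complementarity.} I impose, as a regularity condition, that $F$ has increasing differences in $(\epsilon,\kappa)$. In words: a more competent Solver raises the Challenger's marginal cost of leaving distance slack, since a competent Solver exploits even small linguistic gaps. For smooth $F$ this reads $\partial^2 \bar R_S/\partial\epsilon\,\partial\kappa\ge 0$. The stealth term does not depend on $\kappa$, so this condition reduces to a statement about $\bar R_S$ alone.

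\textbf{(iv) Monotone selection.} Topkis' theorem, applied to $-F$ on the lattice $[0,1]\times\mathbb{R}$, makes $\arg\min_\epsilon F(\epsilon,\kappa)$ decreasing in $\kappa$ in the strong set order. Choosing the greatest (or least) selection, $\kappa_{t+1}\ge\kappa_t$ then yields $\epsilon_{t+1}\le\epsilon_t$, which is Eq.~\eqref{eq:curriculum}. Compactness and continuity (A1)--(A3) from Proposition~\ref{prop:nash} guarantee that the argmin sets are nonempty and closed, so these selections exist.

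The main obstacle is step (iii). Nothing in the paper's stated assumptions yields increasing differences, and without it the claim can fail. For example, if the Solver becomes near-perfect at every distance, $\bar R_S$ flattens in $\epsilon$, and the stealth term then drives the optimum toward smaller $\epsilon$ only weakly or not at all.

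I would first try to derive (iii) from a concrete model: the Solver is correct with probability $p(\epsilon,\kappa)=\Phi(\kappa\cdot g(\epsilon))$ for an increasing link $g$, and I would check supermodularity on the relevant range. If that works only on a restricted region, I would state (iii) explicitly as an added assumption, analogous to the expressiveness assumption in Proposition~\ref{prop:eq-char}(c), and present the result as a conditional comparative-statics statement.

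A secondary subtlety is the reduction of $\phi$ to $\epsilon$ itself. It requires that, for every $\epsilon$, the Challenger's policy class can realize the hardest claim distribution at that distance, which is again an expressiveness assumption of the same kind.
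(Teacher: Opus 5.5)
Your argument is valid as a conditional statement, but it takes a different and much heavier route than the paper, and it puts the difficulty in the wrong place.

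The paper does no comparative statics. It differentiates the Challenger's reward in $-\epsilon$ and observes that the result is an adversarial gain plus a stealth bonus. The adversarial gain is $\ge 0$ precisely by your step (i). The stealth bonus is $\lambda_{\mathrm{stealth}}\alpha e^{-\alpha\epsilon}>0$. So the Challenger is always pushed toward smaller $\epsilon$.

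The same observation finishes your proof at once. Granted (i) and $\lambda_{\mathrm{stealth}}>0$, $F(\cdot,\kappa)$ is a non-decreasing function plus the strictly increasing $-\lambda_{\mathrm{stealth}}e^{-\alpha\epsilon}$. Hence $\arg\min_{\epsilon}F(\epsilon,\kappa)=\{0\}$ (or the least feasible distance) for every $\kappa$. Then $\epsilon_{t+1}=\epsilon_t$ follows without (ii), (iii), Topkis, or the hypothesis $\mathrm{Acc}_{t+1}\ge\mathrm{Acc}_t$. Step (ii) is not used by Topkis in any case; only increasing differences matters.

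This means your claimed failure mode is wrong. A near-perfect Solver that flattens $\bar R_S$ in $\epsilon$ leaves the strictly positive stealth derivative intact, and that derivative alone still pins the minimizer at the lower boundary. Once (i) is granted, (iii) is not an obstacle at all. The only substantive assumptions, in your proof and in the paper's, are (i) and the expressiveness needed for the reduction.

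What your framework does buy is the version of the proposition that would have content. In the objective as written, nothing pushes $\epsilon$ upward: no term penalizes a negative that is not actually false. The best response is therefore a degenerate corner, and the Solver-improvement clause is idle in both arguments.

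Suppose you add a $\kappa$-independent term penalizing small distances, so that the minimizer becomes interior. Then your increasing-differences condition (iii), which still reduces to a condition on $\bar R_S$ alone, is exactly the hypothesis under which the curriculum is genuinely competence-driven. Your probit check shows that it holds only before saturation. With $\Phi$ the normal cdf,
$\partial_\kappa\partial_\epsilon\,\Phi(\kappa g(\epsilon))=g'(\epsilon)\,\Phi'(\kappa g(\epsilon))\,\bigl(1-\kappa^2g(\epsilon)^2\bigr)$,
which is nonnegative only where $|\kappa g(\epsilon)|\le 1$.

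The paper's gradient-sign argument cannot deliver that refinement. Yours can, at the cost of stating (iii) as an explicit local assumption.
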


\begin{proof}
The Challenger's reward (Eq.~\ref{eq:challenger_reward_pair}) consists of two
terms:
$R^{\mathrm{pair}}_C = -R^{\mathrm{pair}}_S
  + \lambda_{\mathrm{stealth}}\,R_{\mathrm{stealth}}$,
where
$R_{\mathrm{stealth}}=\exp(-\alpha\,d(c^{+},c^{-}))$ increases as edit
distance decreases.  The Challenger faces a trade-off: decreasing~$\epsilon$
earns a higher stealth bonus but requires crafting subtler negatives.  At
iteration~$t$, suppose the Challenger uses edit distance~$\epsilon_t$ to
achieve adversarial reward $-R^{\mathrm{pair}}_S(\epsilon_t)$.

When the Solver improves at $t{+}1$, it can now handle difficulty level
$\epsilon_t$, so $R^{\mathrm{pair}}_S(\epsilon_t)$ increases and the
Challenger's adversarial reward $-R^{\mathrm{pair}}_S(\epsilon_t)$
decreases.  To compensate, the Challenger must either
(i)~reduce~$\epsilon$ to make negatives harder, which also increases
$R_{\mathrm{stealth}}$, or
(ii)~keep $\epsilon$ unchanged and accept lower total reward.

Under gradient-based optimization, the marginal benefit of reducing
$\epsilon$ is:
\begin{equation}\label{eq:marginal-benefit}
  \frac{\partial R^{\mathrm{pair}}_C}{\partial(-\epsilon)}
  \;=\;
  \underbrace{%
    \frac{\partial(-R^{\mathrm{pair}}_S)}{\partial(-\epsilon)}%
  }_{\text{adversarial gain}\,\geq\,0}
  \;+\;
  \underbrace{%
    \lambda_{\mathrm{stealth}}\,\alpha\,\exp(-\alpha\epsilon)%
  }_{\text{stealth bonus}\,>\,0}
  \;>\; 0,
\end{equation}
where the adversarial gain is non-negative because harder negatives
(smaller~$\epsilon$) do not increase the Solver's reward.
The strictly positive stealth bonus ensures the overall derivative is
positive, so the Challenger is always incentivized to decrease~$\epsilon$
when the Solver improves.  The equilibrium edit distance~$\epsilon^{*}_t$
therefore satisfies $\epsilon^{*}_{t+1}\leq\epsilon^{*}_t$.
\end{proof}

\begin{remark}
This curriculum property distinguishes DUEL from static negative sampling
and self-consistency methods that do not adapt difficulty to the learner's
progress.  Figure~\ref{fig:two_pdf_minipage}(b) empirically confirms this dynamic:
the Solver's win rate steadily rises, consistent with the Challenger
generating progressively harder challenges rather than collapsing to
trivial strategies.  We note that this argument assumes the Challenger's
policy class is expressive enough to achieve the desired edit distance
reduction; in practice, the VLM backbone provides sufficient capacity for
this.
\end{remark}

\subsection{Variance Reduction via Length-Normalized Rewards}

\begin{proposition}[Gradient Signal Preservation]\label{prop:variance}
Let $R_{\mathrm{out}}=\sigma(h(s),y)\in\{-1,+1\}$ denote the outcome-only
reward, and let $R_S=\sigma(h(s),y)\cdot(-\ell_\theta(s\mid I,c))$ denote
the calibrated reward (Eq.~\ref{eq:solver_reward_single}).  For a group of $K$
rollouts $\{s^{(k)}\}_{k=1}^K$ from the same prompt:

\begin{enumerate}[label=(\alph*)]
\item \textbf{Outcome-only degeneracy.}
  $R_{\mathrm{out}}$ takes values in $\{-1,+1\}$.  When all $K$ rollouts
  share the same decision ($a^{(k)}=a$ for all~$k$),
  $R^{(1)}_{\mathrm{out}}=\cdots=R^{(K)}_{\mathrm{out}}$, so the
  group-normalized advantage $A^{(k)}=0$ for all~$k$ and the gradient
  vanishes entirely.

\item \textbf{Calibrated signal persistence.}
  Under the same unanimity condition, the calibrated reward
  $R^{(k)}_S = \pm\bigl(-\ell_\theta(s^{(k)}\mid I,c)\bigr)$
  still varies across rollouts because different token sequences
  $s^{(k)}$ yield different per-token log-likelihoods.  The
  group-normalized advantages $A^{(k)}\neq 0$ whenever at least two
  rollouts differ in likelihood, preserving gradient signal.

\item \textbf{Quality ranking within correct rollouts.}
  Among rollouts that all produce the correct decision, the calibrated
  reward assigns higher advantage to rollouts with lower per-token
  confidence (larger~$-\ell_\theta$).  In the GRPO framework, this
  upweights ``less certain but correct'' reasoning traces, which
  correspond to harder or more informative verification paths, effectively
  prioritizing learning from challenging instances.
\end{enumerate}
\end{proposition}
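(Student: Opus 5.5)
The plan is to treat each part as a direct computation on the group-normalization map of Eq.~\eqref{eq:grpo_groupnorm}, regarding the rewards $r^{(k)}$ as fixed scalars because $A^{(k)}$ is stop-gradient. I would analyze a single-claim group $\{R^{(k)}\}_{k=1}^K$ drawn from one context $(I,c)$, as in the statement. I would then remark that the paired reward $R_S^{\mathrm{pair}}=R_S(I,c^+,y^+,s^+)+R_S(I,c^-,y^-,s^-)$ inherits the same conclusions, since it is a sum of two such terms.

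The key tool is an elementary fact about the normalization map $r\mapsto A$. Write $A^{(k)}=(r^{(k)}-\mu_r)/(\sigma_r+\epsilon)$ with $\epsilon>0$. Then:
\begin{itemize}
\item $A\equiv 0$ if and only if all $r^{(k)}$ are equal, since $\sigma_r=0$ exactly in that case.
\item For each group, $r\mapsto A$ is a strictly increasing affine map, because $\sigma_r+\epsilon>0$. So it preserves the ordering of rewards within the group.
\end{itemize}

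For part~(a), I would observe that $\sigma(h(s),y)$ depends on $s$ only through $a=h(s)$. Under unanimity, with $y$ fixed by construction, all $R^{(k)}_{\mathrm{out}}$ coincide. Hence $\mu_r=R_{\mathrm{out}}$, so every numerator vanishes, every $A^{(k)}=0$, and the policy-gradient estimator in Eq.~\eqref{eq:solver_grpo_pair} is identically zero.

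For part~(b), under unanimity the sign $\sigma$ is a common constant $\pm1$. Therefore $R_S^{(k)}=\pm(-\ell_\theta(s^{(k)}\mid I,c))$ is an injective affine image of $\ell_\theta(s^{(k)}\mid I,c)$. If two rollouts have different per-token log-likelihoods, the rewards are not all equal, so $\sigma_r>0$ and the vector $A$ is nonzero.

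The main obstacle in (b) is that the statement's wording, ``$A^{(k)}\neq0$'', is too strong if read literally for every $k$: a rollout whose reward equals $\mu_r$ receives zero advantage. I would therefore prove the precise version, namely that $A$ is not the zero vector and $\sum_k (A^{(k)})^2>0$, which is exactly what is needed for a nonvanishing gradient signal. I would add a genericity remark: since distinct sampled sequences almost never share an identical mean token log-probability, the event that all rewards are equal has probability zero whenever the rollouts are not all identical.

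For part~(c), I would restrict to the case where every rollout is correct, so $\sigma=+1$ and $R_S^{(k)}=-\ell_\theta(s^{(k)}\mid I,c)$. By the order-preservation fact, $A^{(k)}>A^{(j)}$ if and only if $-\ell_\theta(s^{(k)})>-\ell_\theta(s^{(j)})$. That is, the less confident correct trace receives the larger advantage. I would note the symmetric consequence when all rollouts are wrong: $\sigma=-1$, and the ranking flips, so the most confident errors are penalized most. The ``harder or more informative'' interpretation is heuristic commentary rather than something to be proved, so I would state it as such.
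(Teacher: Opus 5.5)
Your proof of (a)--(c) is correct and follows the same route as the paper's: equal rewards under unanimity give zero advantages, a common sign makes $R_S$ non-constant across the group once two likelihoods differ, and monotonicity of $R_S$ in $-\ell_\theta$ carries over through the positive-slope normalization. Your version is sharper than the paper's argument, which writes $A^{(k)}\approx 0$ where the value is exactly $0$, and asserts $A^{(k)}\neq 0$ where only the advantage vector is guaranteed to be nonzero.

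Two of your side remarks are wrong, however.

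First, the all-wrong case is backwards. With $\sigma=-1$ the reward is $R_S^{(k)}=\ell_\theta(s^{(k)}\mid I,c)\le 0$, which is largest (closest to $0$) for the most confident rollout. By your own order-preservation fact, the most confident error therefore receives the largest advantage; after centering it is actually reinforced. It is the least confident error that is penalized most. This is the opposite of what you wrote, and of the informal description of the reward as penalizing confident errors, so it should not be stated as a corollary.

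Second, the paired reward does not inherit all three conclusions. Part (a) transfers. In (b), however, a likelihood gap on $c^{+}$ can be cancelled exactly by an opposite gap on $c^{-}$. Take three rollouts, all judged correctly on both claims, with $(\ell_\theta(s^{+}\mid I,c^{+}),\ell_\theta(s^{-}\mid I,c^{-}))$ equal to $(-0.1,-0.3)$, $(-0.3,-0.1)$ and $(-0.2,-0.2)$. All three receive paired reward $0.4$, hence zero advantages despite distinct likelihoods. So for the paired reward, nonvanishing holds only generically. In (c), the ordering is by the sum $-\ell_\theta(s^{+}\mid I,c^{+})-\ell_\theta(s^{-}\mid I,c^{-})$ rather than by the confidence of a single trace.
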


\begin{proof}
\textbf{(a)}\;
When all rewards are identical, the group standard deviation
$\sigma_r=\mathrm{std}\bigl[r^{(k)}\bigr]=0$, making
$A^{(k)}=(r^{(k)}-\mu_r)/(\sigma_r+\epsilon)\approx 0$ for small~$\epsilon$.
The policy gradient
$\sum_k A^{(k)}\nabla_\theta\log\pi_\theta(s^{(k)}\mid I,c)\approx
\mathbf{0}$,
so no learning occurs.

\textbf{(b)}\;
Different sampled sequences $s^{(k)}$ traverse different token paths,
so $\ell_\theta(s^{(k)})\neq\ell_\theta(s^{(j)})$ generically.
Even when $\sigma(h(s^{(k)}),y)$ is the same for all~$k$
(unanimous correctness or unanimous error), the rewards
$R^{(k)}_S = \pm(-\ell_\theta(s^{(k)}\mid I,c))$
have nonzero variance, yielding nonzero group-normalized advantages.

\textbf{(c)}\;
For correct rollouts, $\sigma=+1$ and
$R_S^{(k)} = -\ell_\theta(s^{(k)}\mid I,c) > 0$.
Since $-\ell_\theta$ is larger when per-token confidence is lower
(i.e., log-probabilities are more negative), rollouts with lower
confidence receive higher reward.  After group normalization, these
rollouts receive higher advantages and thus stronger gradient updates.
This follows directly from the monotonicity of~$R_S$
in~$(-\ell_\theta)$ for fixed correctness sign~$\sigma$.
\end{proof}

\begin{remark}
Property~(a)--(b) is practically critical: as the Solver improves and most
rollouts become correct, the outcome-only reward produces vanishing
gradients.  The calibrated reward continues to differentiate rollouts by
generation quality, sustaining learning progress throughout training.  This
is consistent with the stable improvement observed in
Figure~\ref{fig:sensitivity}(d), with no plateau or collapse.
Property~(c) can be interpreted as a form of hard-example mining within the
GRPO framework: among correct rollouts, those that required more ``effort''
(lower confidence) receive stronger gradient updates, encouraging the
policy to internalize harder reasoning patterns.
\end{remark}

\section{Training Time Analysis.}
\label{app:training_time}
Table~\ref{tab:training_cost} summarizes the compute requirements. DUEL requires only 2 GPUs and $\sim$48 GPU-hours with zero annotation cost, making it 3$\times$ cheaper than Vision-Zero and over 10$\times$ cheaper than supervised alternatives that additionally require expensive LLM generated data curation.

\begin{table}[t]
\centering
\caption{Training cost comparison. DUEL achieves competitive performance with significantly lower compute and zero annotation cost. $^\dagger$H200 provides $\sim$2$\times$ throughput over A100 for LLM workloads.}
\label{tab:training_cost}
\small
\setlength{\tabcolsep}{3pt}
\definecolor{pairblue}{RGB}{220,235,245}
\begin{tabular}{@{}lccc@{}}
\toprule
\textbf{Method} & \textbf{Annot.} & \textbf{GPUs} & \textbf{GPU-hrs} \\
\midrule
VLAA-Thinker & 35K QA & 8$\times$A100 & $>$500 \\
OpenVLThinker & Traces & 8$\times$A100 & $>$500 \\
Vision-Zero & None & 8$\times$A100 & $\sim$160 \\
MM-UPT & None & 8$\times$A100 & $\sim$64 \\
\midrule
\rowcolor{pairblue}
\textbf{DUEL}$^\dagger$ & \textbf{None} & \textbf{2$\times$H200} & \textbf{$\sim$48} \\
\bottomrule
\end{tabular}
\end{table}

\section{Self-Play Training Dynamics}
\label{app:self_play_dynamics}

Fig.~\ref{fig:two_pdf_minipage} visualizes the adversarial dynamics. (a) shows the effect of $\lambda_{\mathrm{stealth}}$ on game balance: small values ($\leq 0.2$) yield balanced win rates near 0.5, while larger values ($>0.4$) over-restrict the Challenger, making negatives trivially distinguishable. We select $\lambda_{\mathrm{stealth}}=0.2$ to maintain productive adversarial tension (see Appendix~E for a full sensitivity analysis). (b) shows win rates over training: the Challenger initially dominates ($\sim$0.6) but the Solver steadily improves, stabilizing at $\sim$0.65 by step 2000. Crucially, the Challenger does not collapse to zero ($\sim$0.35), confirming sustained adversarial pressure throughout training ,unlike self-consistency methods that plateau once predictions stabilize.

\begin{figure}[t]
  \centering
  \begin{minipage}[t]{0.5\linewidth}
    \centering
    \includegraphics[width=\linewidth]{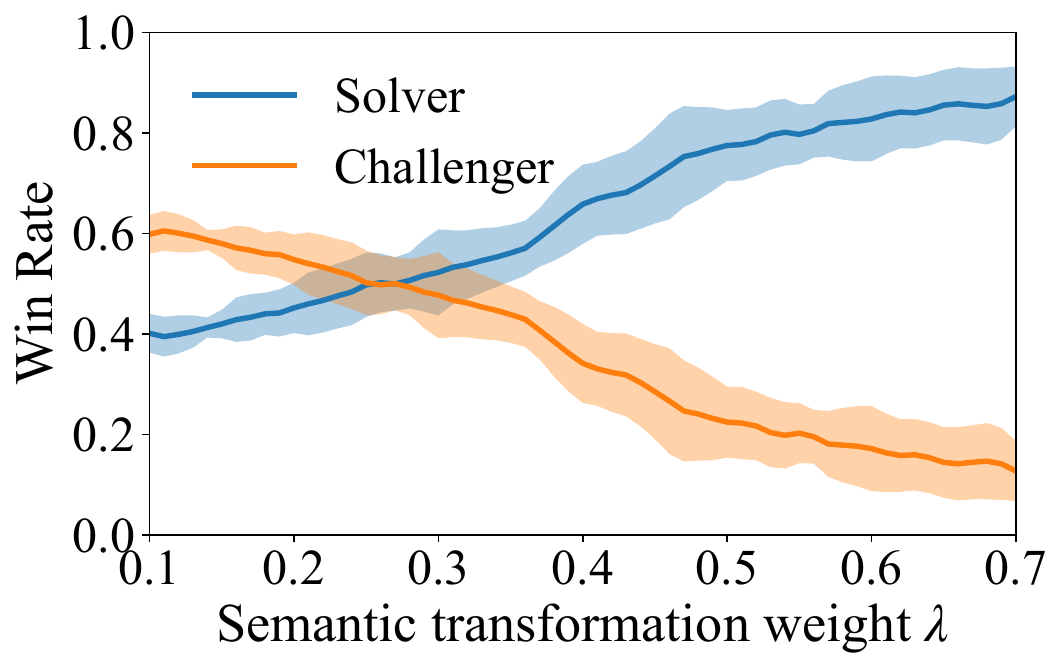}
  \end{minipage}\hfill
  \begin{minipage}[t]{0.5\linewidth}
    \centering
    \includegraphics[width=\linewidth]{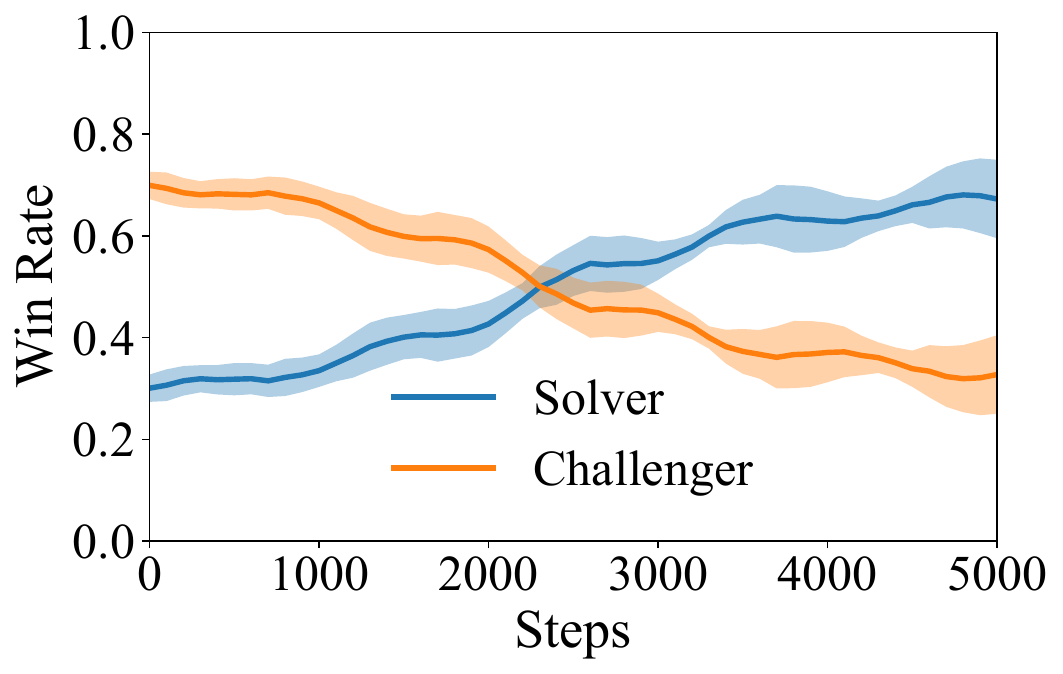}
  \end{minipage}
  \caption{Self-play win-rate dynamics of DUEL. (a) Win rate as a function of the semantic transformation weight $\lambda_{\mathrm{stealth}}$. (b) Win rate over training steps during adversarial self-play for the Solver and the Challenger. Win rate is defined as the Solver's average decision accuracy, and the Challenger win rate is its complement.}
  \label{fig:two_pdf_minipage}
\end{figure}

\section{Cross-Domain Transfer}
\label{app:cross_domain}

We train DUEL on domain-specific image subsets (Charts and Geometry) to assess whether domain-focused training yields targeted improvements.

\begin{table*}[h]
\centering
\caption{Cross-domain transfer results (Qwen2.5-VL-7B). Models trained on domain-specific subsets.}
\label{tab:crossdomain}
\resizebox{\textwidth}{!}{%
\begin{tabular}{l|ccccc|c}
\toprule
\textbf{Training Domain} & \textbf{AI2D} & \textbf{MMMU} & \textbf{MUIRBench} & \textbf{ScienceQA} & \textbf{VisNumBench} & \textbf{Avg} \\
\midrule
Baseline (no training) & 82.6 & 50.2 & 58.1 & 88.5 & 41.5 & 64.2 \\
Full (1K mixed) & 84.3 & 50.9 & 58.4 & 89.2 & 42.8 & 65.1 \\
Charts only & 84.6 & 51.3 & 58.9 & 89.3 & 42.9 & 65.4 \\
Geometry only & 84.4 & 51.1 & 58.9 & 89.3 & 42.9 & 65.3 \\
\bottomrule
\end{tabular}%
}
\end{table*}

Domain-specific training matches or exceeds full mixed-data training: Charts only achieves the highest average (65.4), surpassing both the full 1K mixed setting (65.1) and Geometry only (65.3). This demonstrates that focused data can be as effective as or better than diverse data when the domain is well-matched to downstream tasks. Notably, domai focused training does not induce catastrophic narrowing Geometry only still improves general reasoning (MUIRBench +0.8) and diagram understanding (AI2D +1.8), while Charts only similarly improves across all benchmarks. This enables practitioners to steer DUEL toward specific capability domains without sacrificing generality.

\section{Hyperparameter Ablations}
\label{app:ablations}

We conduct hyperparameter sensitivity studies on the key training parameters of DUEL. All ablations use Qwen2.5-VL-7B with reward floor enabled, trained for 1000 steps. We report the average $r_{\text{true}}$ (reward on correct claims), $r_{\text{false}}$ (reward on incorrect claims; lower is better), and win rate (fraction of steps where $r_{\text{true}} > r_{\text{false}}$) over the final 100 training steps.

\paragraph{Number of Solver Samples $K$.}

\begin{table}[h]
\centering
\caption{Ablation: Number of solver samples $K$.}
\label{tab:ablate_K}
\begin{tabular}{c|ccc}
\toprule
$K$ & \textbf{$r_{\text{true}}$} & \textbf{$r_{\text{false}}$} & \textbf{Win Rate} \\
\midrule
1 & 0.097 & 0.083 & 0.350 \\
3 & \textbf{0.368} & 0.606 & 0.300 \\
5 & 0.347 & 0.574 & 0.330 \\
7 & 0.247 & 0.639 & 0.250 \\
\bottomrule
\end{tabular}
\end{table}

$K{=}1$ is degenerate: both $r_{\text{true}}$ and $r_{\text{false}}$ collapse to $\sim$0.09, confirming that the majority-vote reward mechanism requires multiple samples to produce meaningful signal. $K{=}3$ achieves the highest $r_{\text{true}}$ (0.368) among all values, while $K{=}7$ paradoxically performs worst at 1000 steps (r\_true=0.247)---likely because more samples per step means noisier gradients and slower convergence at a fixed step budget. We select $K{=}3$ as the best cost-performance tradeoff: it provides stable signal at minimal compute overhead (each step requires $K$ forward passes).

\paragraph{Solver Soft Gamma $\gamma$.}

\begin{table}[h]
\centering
\caption{Ablation: Solver soft gamma $\gamma$.}
\label{tab:ablate_gamma}
\begin{tabular}{c|ccc}
\toprule
$\gamma$ & \textbf{$r_{\text{true}}$} & \textbf{$r_{\text{false}}$} & \textbf{Win Rate} \\
\midrule
0.3 & 0.459 & 0.589 & 0.290 \\
0.5 & 0.326 & 0.583 & 0.290 \\
0.7 & 0.361 & 0.594 & 0.290 \\
1.0 & 0.370 & \textbf{0.508} & \textbf{0.360} \\
\bottomrule
\end{tabular}
\end{table}

The soft gamma $\gamma$ controls the sharpness of the reward boundary between correct and incorrect claims. While $\gamma{=}0.3$ maximizes raw $r_{\text{true}}$ (0.459), it simultaneously elevates $r_{\text{false}}$ (0.589), suggesting the solver exploits the soft boundary rather than truly learning to distinguish claims. In contrast, $\gamma{=}1.0$ (binary reward with no soft discounting) achieves the best true/false separation---lowest $r_{\text{false}}$ (0.508) and highest win rate (0.360)---confirming that a clean binary signal is more effective for discrimination learning. The flat win rate for $\gamma \leq 0.7$ (all at 0.290) further indicates that soft discounting provides no disambiguation benefit.

\paragraph{Reward Correct Floor $r_{\min}$.}

During extended training runs, we observe a reward collapse phenomenon where $r_{\text{true}} \to 0$ after approximately 3000 steps (Fig.~\ref{fig:reward_collapse}a). This occurs because the solver develops a verification bias: as it increasingly outputs ``no'' (reject), the probability of accepting true claims drops, leading to $\exp(\text{ll}) \approx 0$, which eliminates gradient signal for correct answers and reinforces the rejection bias in a death spiral.

To address this, we introduce a reward floor $r_{\min}$ that guarantees a minimum reward for correct verifications, ensuring gradient signal is maintained throughout training. Fig.~\ref{fig:reward_collapse}(b) shows that with $r_{\min}=0.4$, both $r_{\text{true}}$ and $r_{\text{false}}$ stabilize at $\sim$0.25--0.30 through the full 5000 steps, avoiding collapse entirely.

\begin{figure}[h]
\centering
\includegraphics[width=\linewidth]{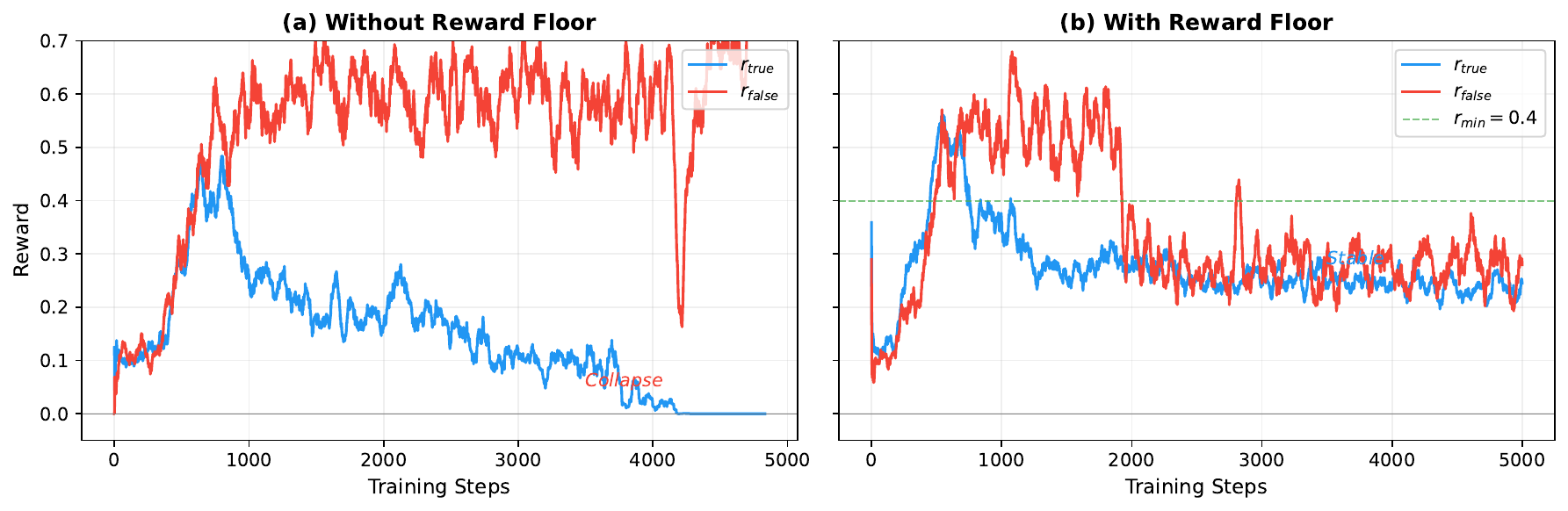}
\caption{Training reward dynamics. (a) Without reward floor: $r_{\text{true}}$ collapses to zero after $\sim$3000 steps as the solver develops a rejection bias. (b) With reward floor ($r_{\min}{=}0.4$): training remains stable for the full 5000 steps.}
\label{fig:reward_collapse}
\end{figure}

We note that the main results in Table~\ref{tab:main_results_all} were obtained \emph{without} the reward floor, as the training was terminated at 5000 steps before full collapse affected downstream performance. The reward floor is presented here as a stabilization mechanism for practitioners who wish to train for longer or observe training instability.

Table~\ref{tab:ablate_rfloor} ablates the floor value:

\begin{table}[h]
\centering
\caption{Ablation: Reward correct floor $r_{\min}$.}
\label{tab:ablate_rfloor}
\begin{tabular}{c|ccc}
\toprule
$r_{\min}$ & \textbf{$r_{\text{true}}$} & \textbf{$r_{\text{false}}$} & \textbf{Win Rate} \\
\midrule
0.0 & 0.313 & 0.582 & 0.290 \\
0.2 & 0.296 & 0.659 & 0.190 \\
0.4 & \textbf{0.395} & 0.588 & \textbf{0.340} \\
0.6 & 0.394 & 0.585 & 0.280 \\
\bottomrule
\end{tabular}
\end{table}

The reward floor $r_{\min}{=}0.4$ optimally balances collapse prevention with reward discrimination, achieving the highest $r_{\text{true}}$ (0.395) and best win rate (0.340). Without a floor ($r_{\min}{=}0.0$), the solver suffers from early signs of reward collapse, yielding lower $r_{\text{true}}$ (0.313). Surprisingly, $r_{\min}{=}0.2$ is counterproductive (worst win rate at 0.190)---likely because this floor is too low to prevent collapse but high enough to confuse the reward landscape. At $r_{\min}{=}0.6$, the floor oversaturates the reward, reducing discriminative power (win rate 0.280). We recommend $r_{\min}{=}0.4$ for training runs exceeding 5000 steps.

\paragraph{Stealth Loss $\lambda_{\mathrm{stealth}}$.}

\begin{table}[h]
\centering
\caption{Ablation: Stealth loss coefficient $\lambda_{\mathrm{stealth}}$ (500-step runs).}
\label{tab:ablate_stealth}
\begin{tabular}{c|ccc}
\toprule
$\lambda_{\mathrm{stealth}}$ & \textbf{$r_{\text{true}}$} & \textbf{$r_{\text{false}}$} & \textbf{Win Rate} \\
\midrule
0.05 & 0.278 & 0.353 & 0.380 \\
0.10 & 0.258 & \textbf{0.237} & \textbf{0.540} \\
0.20 & \textbf{0.391} & 0.479 & 0.410 \\
0.40 & 0.382 & 0.404 & 0.420 \\
0.60 & 0.261 & 0.273 & 0.490 \\
0.80 & 0.292 & 0.302 & 0.470 \\
1.00 & 0.350 & 0.398 & 0.470 \\
\bottomrule
\end{tabular}
\end{table}

The stealth coefficient $\lambda_{\mathrm{stealth}}$ controls how tightly the Challenger's negatives must resemble the positive claim. At $\lambda{=}0.1$, the Challenger produces negatives most distinguishable from positives (lowest $r_{\text{false}}=0.237$, highest win rate 0.540), while $\lambda{=}0.2$ maximizes $r_{\text{true}}$ (0.391), indicating the solver receives the strongest learning signal. We select $\lambda{=}0.2$ as the operating point that balances hard negative generation with stable solver improvement. Very high values ($\lambda \geq 0.6$) restrict the Challenger's editing space excessively, producing negatives so similar to positives that both rewards converge (low discrimination).

\section{Training Details}
\label{app:training_details}
Please check Table~\ref{tab:hyperparams}.

\begin{table}[h]
\centering
\caption{Shared hyperparameters for all DUEL experiments.}
\label{tab:hyperparams}
\begin{tabular}{ll}
\toprule
\textbf{Parameter} & \textbf{Value} \\
\midrule
LoRA rank $r$ & 16 \\
LoRA alpha $\alpha$ & 32 \\
LoRA dropout & 0.05 \\
LoRA targets & \texttt{q,k,v,o,gate,up,down\_proj} \\
Total training steps & 5000 \\
Solver samples $K$ & 3 \\
Challenger update frequency $f_C$ & 2 \\
Freeze vision encoder & Yes \\
Learning rate & $1 \times 10^{-6}$ \\
Stealth loss $\lambda_{\mathrm{stealth}}$ & 0.2 \\
Stealth temperature $\alpha$ & 5 \\
Batch size (per device) & 1 \\
Gradient accumulation & 4 \\
GPUs per job & 2$\times$ NVIDIA H200 \\
Training time (7B model) & $\sim$24 hours \\
\bottomrule
\end{tabular}
\end{table}

\begin{figure*}[h]
  \centering
  \includegraphics[width=\textwidth]{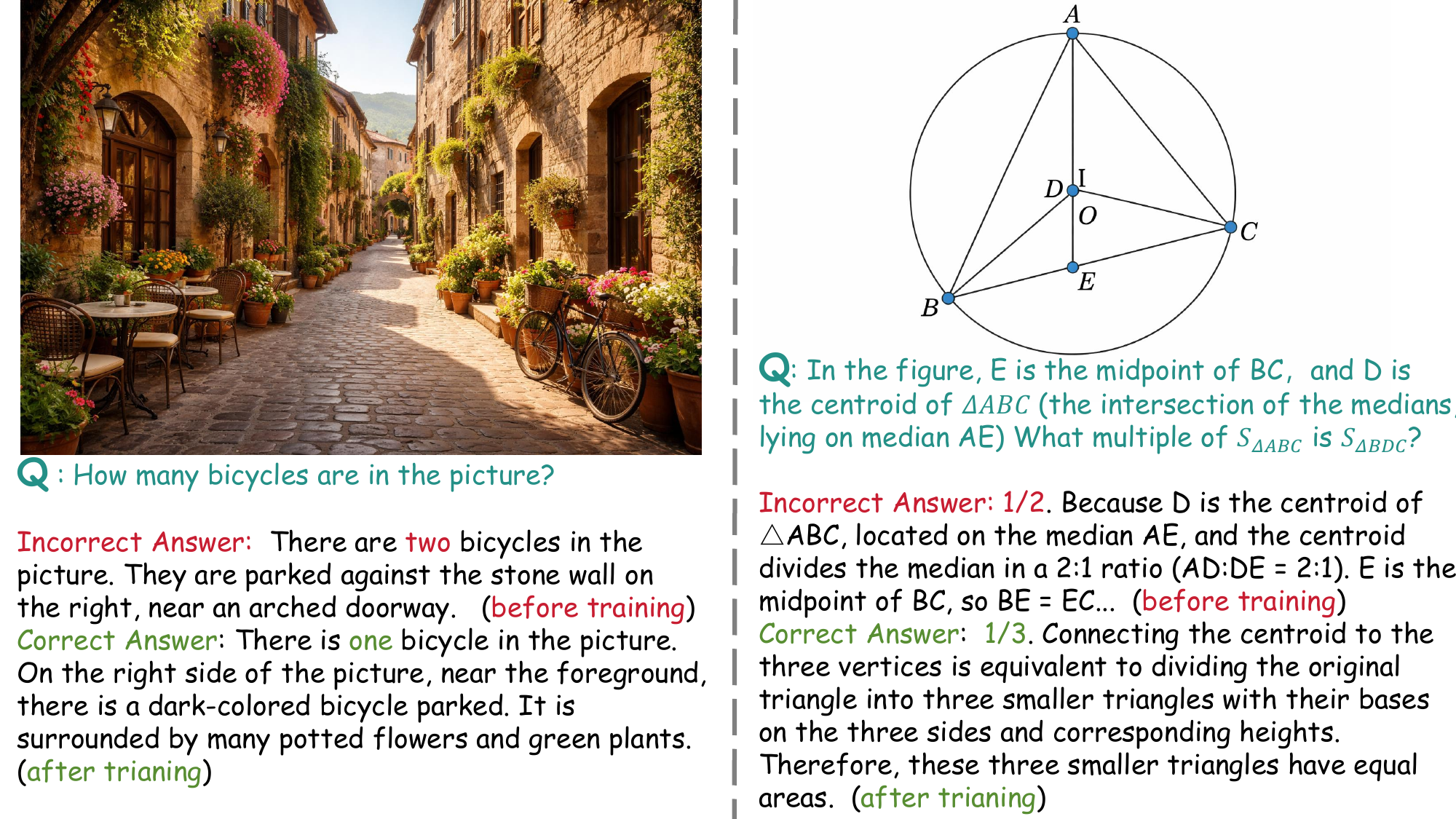}
  \caption{Compare examples of answers to some challenging questions generated before and after training a self-evolving visual language model.}
  \label{fig:case}
\end{figure*}
\section{Qualitative Examples}
\label{app:qualitative}
Fig.~\ref{fig:case} shows two examples comparing model behavior before and after DUEL training. In the natural image case (left), the base model hallucinates "two bicycles" from scene priors; after DUEL, it correctly identifies one bicycle grounded in specific visual evidence. In the geometry case (right), the base model produces an incorrect ratio with flawed reasoning; after DUEL, it outputs the correct answer with valid geometric justification. Both cases illustrate the core behavioral shift induced by adversarial self-play: from plausible sounding but ungrounded generation to answers explicitly anchored in visual evidence.

Additionally, we present representative examples where the base model answers incorrectly but the DUEL-trained model produces the correct answer, drawn from actual evaluation logs. These demonstrate that DUEL's adversarial self-play strengthens both visual grounding and logical reasoning.

\begin{figure}[h]
\centering
\includegraphics[width=0.45\linewidth]{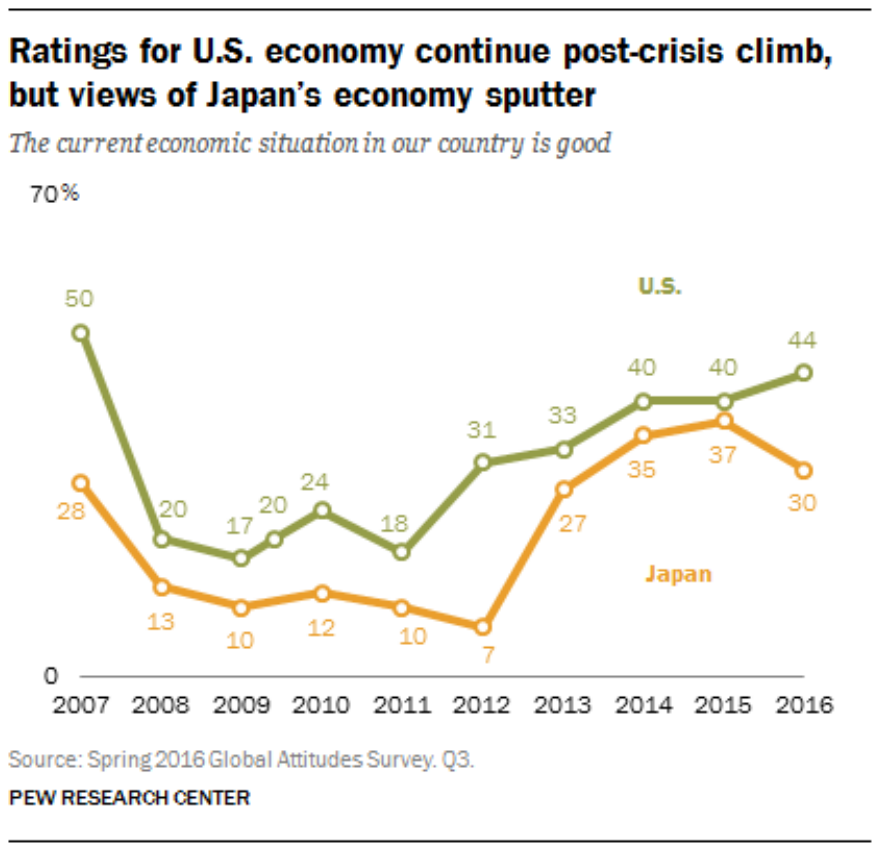}

\vspace{4pt}

{\small
\textbf{Q:} What is the median value of Japan graph from 2013 to 2015? \quad
\textbf{True:} 35 \\[2pt]

\textcolor{red}{\textbf{Base:}} 33
(\ding{55} misreads y-axis value)
\qquad
\textcolor{ForestGreen}{\textbf{+DUEL:}} 35
(\ding{51} accurately reads median from graph)
}

\caption{Qualitative example on ChartQA comparing the base model and DUEL.}
\label{fig:chartqa_qual}
\end{figure}

\begin{figure}[h]
\centering
\includegraphics[width=0.45\linewidth]{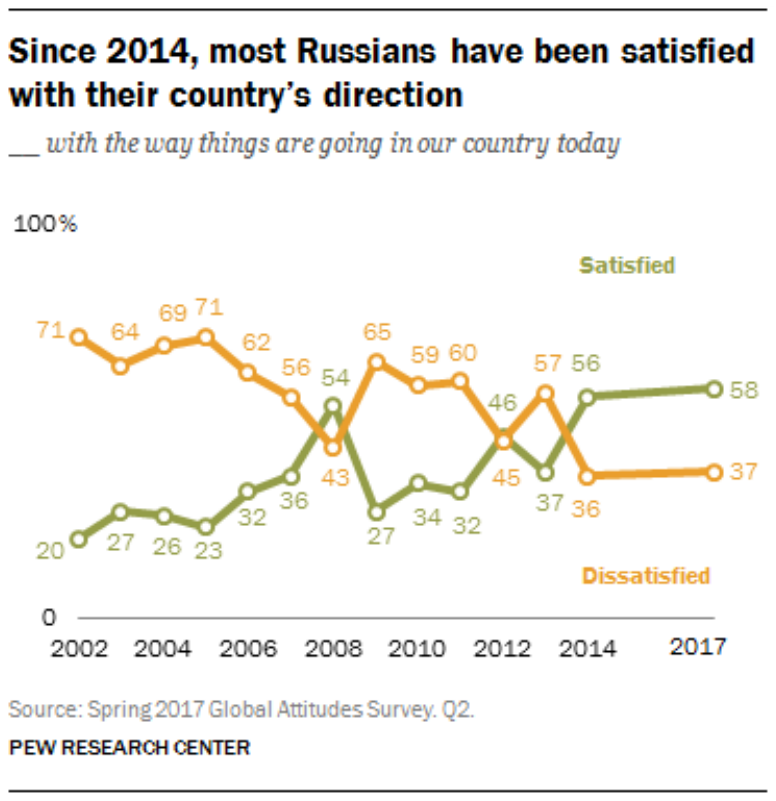}

\vspace{4pt}
\small
\textbf{Q:} Is the median of green graph from 2002 to 2006 greater than smallest value of orange graph? \quad \textbf{True:} No \\[2pt]
\textcolor{red}{\textbf{Base:}} Yes (\ding{55} fails multi-step comparison) \qquad
\textcolor{ForestGreen}{\textbf{+ DUEL:}} No (\ding{51} accurately compares cross-series values)
\end{figure}


\begin{figure}[h]
\centering
\includegraphics[width=0.4\linewidth]{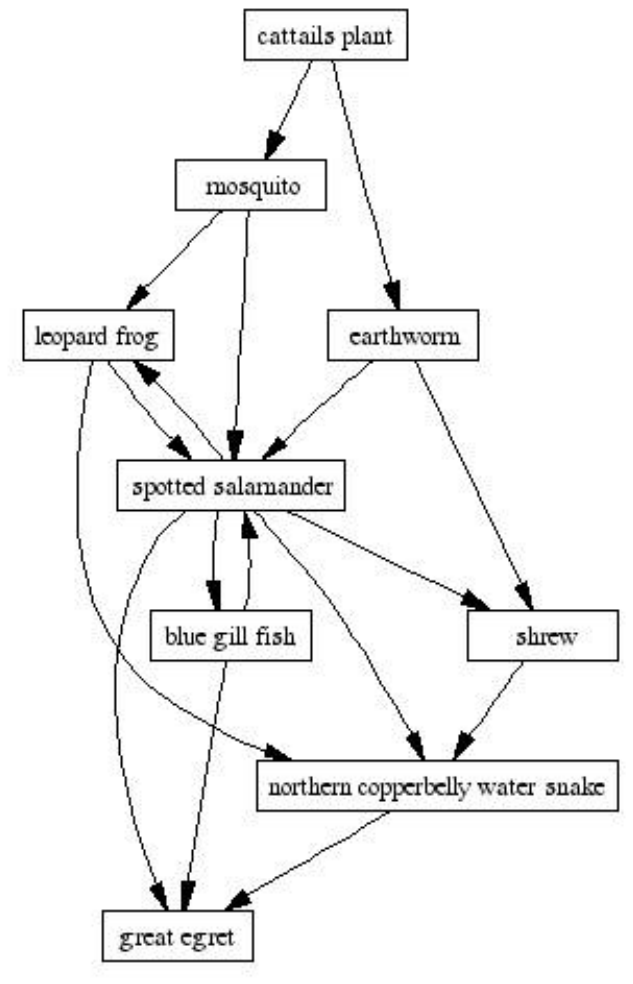}

\vspace{4pt}
\small
\textbf{Q:} Select the organism which is both carnivorous as well as food for other carnivores. \\
\textbf{Options:} A. Earthworm \; B. Spotted salamander \; C. Mosquito \; D. Great ret \quad \textbf{True:} B \\[2pt]
\textcolor{red}{\textbf{Base:}} D (Great egret) (\ding{55} fails to trace predator-prey arrows) \\
\textcolor{ForestGreen}{\textbf{+ DUEL:}} B (Spotted salamander) (\ding{51} correctly identifies dual role in food web)
\end{figure}

\begin{figure}[h]
\centering
\includegraphics[width=0.35\linewidth]{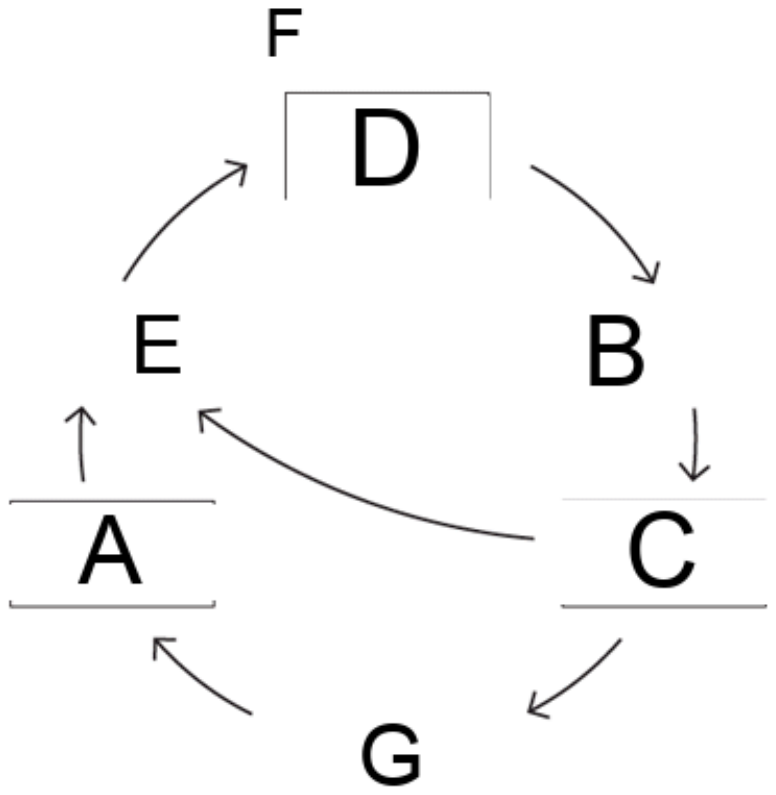}

\vspace{4pt}
\small
\textbf{Q:} What letter in the diagram represents the respiration stage where CO$_2$ is exhaled? \\
\textbf{Options:} A. C \; B. B \; C. E \; D. G \quad \textbf{True:} C \\[2pt]
\textcolor{red}{\textbf{Base:}} B (\ding{55} misidentifies diagram label for exhalation) \qquad
\textcolor{ForestGreen}{\textbf{+ DUEL:}} C (\ding{51} correctly maps CO$_2$ exhalation to labeled stage)
\end{figure}

\paragraph{Analysis.} The ChartQA examples demonstrate improved numerical precision: the DUEL-trained model more accurately reads axis values and performs multi-step comparisons across data series. The AI2D examples show enhanced diagram grounding: the trained model correctly traces relationships (food web arrows, process stages) rather than defaulting to superficially plausible answers. Both patterns are consistent with DUEL's training objective, which requires the Solver to distinguish between visually grounded true claims and near-neighbor false claims---a task that directly exercises precise visual reading and relational reasoning.


\end{document}